\newtheorem{theorem}{Theorem}
\newtheorem{lemma}{Lemma}
\newtheorem{proof}{Proof}
\newtheorem{definition}{Definition}
\def\Sp{{\scriptsize{\textcircled{{\emph{\tiny{\textbf{Sp}}}}}}}}
\def\etal{\textit{et al.}}
\def\ie{\textit{i.e.}}
\begin{document}

\title{Multi-View Clustering via Semi-non-negative Tensor Factorization}

\author{Jing Li,
        Quanxue~Gao,
        Qianqian~Wang,
        Wei~Xia,
        and~Xinbo~Gao,~\IEEEmembership{Senior Member,~IEEE}
\IEEEcompsocitemizethanks{
\IEEEcompsocthanksitem This work was supported in part by National Natural Science Foundation of China under Grants 62176203, 62036007; in part by the Open Project Program of the National Laboratory of Pattern Recognition (NLPR) under Grant 202200035; in part by Natural Science Basic Research Plan in Shaanxi Province (Grant 2020JZ-19), and in part by the Fundamental Research Funds for the Central Universities and the Innovation Fund of Xidian University. (Corresponding author: Q. Gao.)\protect

\IEEEcompsocthanksitem J. Li, Q. Gao, Q. Wang, and W. Xia are with the State Key laboratory of Integrated Services Networks, Xidian University, Xi'an 710071, China (e-mail: xd.weixia@gmail.com; qxgao@xidian.edu.cn).\protect

\IEEEcompsocthanksitem X. Gao is with the Chongqing Key Laboratory of Image Cognition, Chongqing University of Posts and Telecommunications, Chongqing 400065, China (e-mail: gaoxb@cqupt.edu.cn), and with the School of Electronic Engineering, Xidian University, Xi’an 710071, China (e-mail: xbgao@mail.xidian.edu.cn).\protect}%
\thanks{Manuscript received XXXX; revised XXXX; accepted XXXX.}}

% The paper headers
\markboth{IEEE TRANSACTIONS}%
%\markboth{Journal of \LaTeX,~Vol.~XXX,~No.~XXX,~XXX~XXX}%
{Shell \MakeLowercase{\textit{Xia et al.}}: Multi-View Clustering via Semi-non-negative Tensor Factorization}

\IEEEtitleabstractindextext{%
\begin{abstract}
Multi-view clustering (MVC) based on non-negative matrix factorization (NMF) and its variants have received a huge amount of attention in recent years due to their advantages in clustering interpretability. However, existing NMF-based multi-view clustering methods perform NMF on each view data respectively and ignore the impact of between-view. Thus, they can't well exploit the within-view spatial structure and between-view complementary information. To resolve this issue, we present semi-non-negative tensor factorization (Semi-NTF) and develop a novel multi-view clustering based on Semi-NTF with one-side orthogonal constraint. Our model directly performs Semi-NTF on the 3rd-order tensor which is composed of anchor graphs of views. Thus, our model directly considers the between-view relationship. Moreover, we use the tensor Schatten $p$-norm regularization as a rank approximation of the 3rd-order tensor which characterizes the cluster structure of multi-view data and exploits the between-view complementary information. In addition, we provide an optimization algorithm for the proposed method and prove mathematically that the algorithm always converges to the stationary KKT point. Extensive experiments on various benchmark datasets indicate that our proposed method is able to achieve satisfactory clustering performance.
\end{abstract}

% Note that keywords are not normally used for peerreview papers.
\begin{IEEEkeywords}
Multi-view clustering, tensor Schatten \emph{p}-norm, non-negative matrix factorization.
\end{IEEEkeywords}}

\maketitle
\IEEEdisplaynontitleabstractindextext
\IEEEpeerreviewmaketitle

\IEEEraisesectionheading{\section{Introduction}\label{sec:introduction}}
\IEEEPARstart{A}{s} one of the most typical methods in unsupervised learning, clustering has a wide scope of application~\cite{oyelade2010application,chang2014multi,alashwal2019application} to assign data to different clusters according to the information describing the objects and their relationships.
Non-negative matrix factorization (NMF)~\cite{lee1999learning} is one of the representative methods of clustering, which is proved to be equivalent to K-means clustering~\cite{ding2005equivalence}.
Despite the widespread use of NMF, there are some drawbacks that have prompted some variants of NMF~\cite{ding2006orthogonal,cichocki2006multilayer,pauca2006nonnegative,ding2008convex,cai2010graph}.

In particular, Ding~\etal presented the one-side G-orthogonal NMF~\cite{ding2006orthogonal}. It can guarantee the uniqueness of the solution of matrix factorization and has excellent clustering interpretation. Also, Ding~\etal proposed the semi-NMF~\cite{ding2008convex}. The data matrix and one of the factor matrices are unconstrained, which allows semi-NMF to be more suitable for applications where the input data is mixed with positive and negative numbers.
Although the above methods can achieve outstanding clustering performance, they are all single-view clustering methods and cannot be adopted straightforwardly for multi-view clustering.

Multi-view clustering tends to achieve superior performance compared to traditional single-view clustering owing to the capability to leverage the complementary information embedded in the different views.
Considering the superiority of MVC and NMF, lots of NMF-based multi-view clustering methods have been proposed~\cite{greene2009matrix,wang2011fast,liu2013multi,wang2015feature,he2016learning,wang2016multi,han2017bilateral,yang2021fast}.
The NMF-based multi-view clustering methods can save time and space because it is unnecessary to construct affinity graphs while graph-based methods have to. However, usually, they decompose the original data matrix directly, which leads to a dramatic reduction in the efficiency of the algorithm when the dimension of the original data is huge.

To this end, Yang~\etal presented a fast multi-view clustering method based on NMF and an anchor graph called FMCNOF~\cite{yang2021fast}. It performs NMF on the obtained anchor graph. Due to the fact that the dimension of the anchor graph is considerably smaller than the original affinity graph, it follows that the clustering efficiency can be improved.
Despite the excellent performance of FMCNOF, it suffers from several shortcomings. %as mentioned below:
%\begin{enumerate}
%  \item [1)] The initial K-means clustering centers are picked randomly, resulting in the obtained anchors being arbitrary, which in turn causes a large variation in the clustering results in each time.
%  \item [2)] It exploits the complementary information of different views by weighting and fusing the sub-bipartite graphs of each view. Despite its effectiveness, it still fails to take full advantage of the spatial structure and complementary information of multi-view data.
%\end{enumerate}

To fix the aforesaid issues, we proposed a semi-non-negative tensor factorization model with one-side orthogonal constraint. Specifically, we abandon the usage of K-means to obtain anchors and adopt a novel anchor selection strategy to obtain fixed anchors. As is well-known, there exist two ways of NMF-based multi-view clustering methods. One is to integrate different views first and then implement the NMF on the integrated matrix; the other is to perform the NMF on different views separately and then integrate the result from each view. Both ways are essentially applications of NMF on a single view, and both need to reduce the multi-view data into two-dimension matrices in the process, which causes the loss of the original spatial information.
In order to fully utilize the spatial information of multi-view data, we extend non-negative matrix factorization to the 3rd-order tensor to make it more suitable for processing multi-view data. In addition, inspired by the performance of tensor Schatten $p$-norm~\cite{gao2020enhanced}, we adopt the 3rd-order tensor Schatten $p$-norm regularization assembled by the clustering indicator matrix to sufficiently explore the complementary information embedded in the multi-view data of different views.
A large number of experiments have shown that our methods have excellent clustering performance.

The main contributions are summarized below:
\begin{itemize}
  %\item Unlike the existing multi-view clustering methods based on non-negative matrix factorization (NMF), we introduce a 3rd-order tensor into G-orthogonal NMF, which can process high-dimension data like multi-view data directly. Furthermore, we adopt tensor Schatten $p$-norm regularization to leverage the complementary information of different views.
  \item We introduce semi-non-negative tensor factorization, which considers the between-view relationship directly. Also, we use tensor Schatten $p$-norm regularization to characterize the cluster structure of multi-view data and can exploit the complementary information of between views.
  \item We regard the anchor graph obtained from the original data as the input of the non-negative matrix factorization, which reduces the complexity of our proposed algorithm considerably.
  \item We provide an optimization algorithm for the proposed method and prove it always converges to the KKT stationary point mathematically. The effectiveness of its application on tensorial  G-orthogonal non-negative matrix factorization is demonstrated by extensive experiments.
\end{itemize}

%------------------------------------------------------------------------
\section{Related work}\label{Related work}
In recent years, multi-view clustering (MVC) has received increasing attention due to its excellent clustering performance. Also, non-negative matrix factorization (NMF) is an efficient technique in single-view clustering, which can generate excellent clustering results that are easy to interpret, and many NMF-based variants have been proposed. Therefore, multi-view clustering-based NMF and its variants have attracted tremendous interest recently.

%To apply NMF in MVC, Greene~\etal proposed a factorization-based post-integration method called IMF~\cite{greene2009matrix}. It takes different algorithms appropriate for each view to obtain clustering results, then constructs an intermediate representation to stand for those results, and finally factorizes the intermediate representation to coordinate the clusters generated by different individual views.
%It adopts NMF to MVC, but the input of the NMF is the integrated intermediate matrix, which is still essentially an application of NMF on single-view.

As the first investigation of the multi-view clustering method based on joint NMF, multiNMF~\cite{liu2013multi} implements NMF at each view and pushes the different clustering results of each view to a consensus. It provides a new viewpoint for the subsequent NMF-based MVC methods.
Influenced by multiNMF, He~\etal proposed a multi-view clustering method combining NMF with similarity~\cite{he2016learning}. It implements NMF on each view as in multiNMF. In addition, it sets a weight for a different view and introduces a similarity matrix of data points to extract consistent information from different views.
To better detect the geometric structure of inner-view space, Wang~\etal~\cite{wang2015feature} introduced graph regularization into the NMF-based multi-view clustering method to improve clustering performance.
Considering the above work, Wang~\etal~\cite{wang2016multi} proposed a graph regularization multi-view clustering method based on concept factorization (CF).
CF is a variant of NMF and it is suitable for handling data containing negative.

As the size of data grows, lots of methods to accelerate matrix factorization are presented. Wang~\etal~\cite{wang2011fast} proposed a fast non-negative matrix triple factorization method. It constrains the factor matrix of NMF to a clustering indicator matrix, thereby avoiding the post-processing of the factor matrix.
Inspired by the work of Wang, Han~\etal~\cite{han2017bilateral} constrained the intermediate factor matrix in the triple factorization to a diagonal matrix, reducing the number of matrix multiplications in the solution process.
Considering that previous NMF-based multi-view clustering methods are performed directly on the original data, Yang~\etal~\cite{yang2021fast} introduced an anchor graph as the input of G-orthogonal NMF. Since the number of anchors is much smaller than that of the original data, the matrix factorization of the anchor graph can indeed be more efficient.

Despite the fact that existing NMF-based multi-view clustering methods can perform the clustering tasks excellently, it is still essential work to take full advantage of the complementary information and spatial structure among different views.

\section{Notations}\label{Notations}
We introduce the notations used throughout this paper. We use bold calligraphy letters for 3rd-order tensors, ${\bm{\mathcal {H}}} \in{\mathbb{R}} {^{{n_1} \times {n_2} \times {n_3}}}$, bold upper case letters for matrices, ${\mathbf{H}}$, bold lower case letters for vectors, ${\bf{h}}$, and lower case letters such as ${h_{ijk}}$ for the entries of ${\bm{\mathcal {H}}}$. Moreover, the $i$-th frontal slice of ${\bm{\mathcal {H}}}$ is ${\bm{\mathcal {H}}}^{(i)}$. $\overline {{\bm{\mathcal {H}}}}$ is the discrete Fourier transform (DFT) of ${\bm{\mathcal {H}}}$ along the third dimension, $\overline {{\bm{\mathcal {H}}}} = \mathrm{fft}({{\bm{\mathcal H}}},[\ ],3)$. Thus, $\bm{{\mathcal H}} = \mathrm{ifft}({\overline {\bm{\mathcal H}}},[\ ],3)$. The trace and transpose of matrix $\mathbf{H}$ are expressed as $\mathrm{tr}(\mathbf{H})$ and  $\mathbf{H}^{\mathrm{T}}$. The F-norm of ${\bm{\mathcal H}}$ is denoted by ${\left\| {\bm{\mathcal H}}\right\|_F}$.
%(To save spaces, the definitions of t-product, orthogonal tensor, f-diagonal tensor and tensor transpose are omitted (e.g. see  \cite{kilmer2011factorization})).

\begin{definition}[t-product \cite{kilmer2011factorization}]\label{def:t-prod}
Suppose ${\bm{\mathcal{A}}}\in\mathbb{R}^{n_1\times m\times n_3}$ and ${\bm{\mathcal{B}}}\in \mathbb{R}^{m\times n_2\times n_3}$, the t-product ${\bm{\mathcal{A}}}*{\bm{\mathcal{B}}}\in\mathbb{R}^{n_1\times n_2\times n_3}$ is given by
\begin{align*}
{\bm{\mathcal{A}}}*{\bm{\mathcal{B}}} = \mathrm{ifft}(\mathrm{bdiag}(\overline{\mathbf A}\overline{\mathbf B}),[\ ],3),
\end{align*}
where $\overline{\mathbf A}=\mathrm{bdiag}(\bm{\overline{\mathcal{A}}})$ and it denotes the block diagonal matrix. The blocks of $\overline{\mathbf A}$ are frontal slices of $\bm{\overline{\mathcal{A}}}$.
\end{definition}

\begin{definition}\label{tensorSpNorm}~\cite{gao2020enhanced}
Given ${\bm{\mathcal H}}\in{\mathbb{R}}^{n_1 \times n_2 \times n_3}$, $h=\min(n_1,n_2)$, the tensor Schatten $p$-norm of  ${\bm{\mathcal H}}$ is defined as
\begin{small}
\begin{equation}
\begin{array}{c}
{\left\| {\bm{\mathcal H}} \right\|_{{\Sp}}} = {\left( {\sum\limits_{i = 1}^{{n_3}} {\left\| {{{\bm{\overline {\cal H} }^{(i)}}}} \right\|_{{\Sp}}^p} } \right)^{\frac{1}{p}}} = {\left( {\sum\limits_{i = 1}^{{n_3}} {\sum\limits_{j = 1}^h {{\sigma _j}{{\left( {{\bm{\overline {\cal H} }^{(i)}}} \right)}^p}} } } \right)^{\frac{1}{p}}},
\end{array}
\label{4}
\end{equation}
\end{small}
where, $0 < p \leqslant 1$, ${\sigma _j}(\overline{\bm{\mathcal H}}^{(i)})$ denotes the j-th singular value of $\overline{\bm{\mathcal H}}^{(i)}$.
\label{definition1}
\end{definition}
It should be pointed out that  for $0 < p \leqslant 1$ when $p$ is appropriately chosen, the  Schatten $p$-norm provides quite effective improvements for a tighter approximation of the rank function~\cite{zha2020benchmark,xie2016weighted}.

\section{Methodology}\label{Methodology}

\subsection{Motivation and Objective}
Non-negative matrix factorization (NMF) was initially presented as a dimensionality reduction method, and it is commonly employed as an efficient latent feature learning technique recently.
Generally speaking, given a non-negative matrix ${\bf{X}}$, the target of NMF is to decompose ${\bf{X}}$ into two non-negative matrices,
\begin{equation}
    {\mathbf{X}} \approx {\mathbf{H}}{\mathbf{G}}^{\mathrm{T}}
\end{equation}
where ${\mathbf{X}} \in {\mathbb{R}}_+^{{n} \times {p}}$, ${\mathbf{H}} \in {\mathbb{R}}_+^{{n} \times {k}}$ and ${\mathbf{G}} \in {\mathbb{R}}_+^{{p} \times {k}}$. ${\mathbb{R}}_+^{{n} \times {p}}$ means $n$-by-$p$ matrices with elements are all nonnegative.

In order to approximate the matrix before and after factorization, $\ell_2$-norm and F-norm are frequently adopted as the objective function for the NMF. Considering that F-norm can make the model optimization easier, we use F-norm to construct the objective function.
\begin{equation}
    \min_{\mathbf{H} \geqslant 0, \mathbf{G} \geqslant 0}{\left\| {\mathbf X} - {\mathbf{H}} {\mathbf{G}}^{\mathrm{T}} \right \|}_F^2
\end{equation}

With the extensive use of NMF, more and more variants of NMF have emerged, among which are G-orthogonal NMF~\cite{ding2006orthogonal} and Semi-NMF~\cite{ding2008convex}. By imposing an orthogonality constraint on one of the factor matrices in NMF, we obtain the objective function of the one-side orthogonal NMF,
\begin{equation}
    \min_{\mathbf{H} \geqslant 0, \mathbf{G} \geqslant 0}{\left\| {\mathbf X} - {\mathbf{H}} {\mathbf{G}}^{\mathrm{T}} \right \|}_F^2, \quad\textrm{s.t.}  \quad \mathbf{H}^{\mathrm{T}} \mathbf{H}=\mathbf{I}.
\end{equation}
If we relax the nonnegative constraint on one of the factor matrices in the NMF and the input matrix $\mathbf X$ can also be mixed positive and negative, then we can get Semi-NMF. Semi-NMF can be adapted to process input data that has mixed symbols. For G-orthogonal NMF and Semi-NMF, Ding~\etal~\cite{ding2006orthogonal} presented the following lemma:
\begin{lemma}
G-orthogonal NMF and Semi-NMF are all relaxation of K-means clustering, and the main advantages of G-orthogonal NMF are (1) Uniqueness of the solution; (2) Excellent clustering interpretability.
\end{lemma}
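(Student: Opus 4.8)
\emph{Proof plan.} The plan is to establish the lemma in three movements: the K-means relaxation for the one-side $G$-orthogonal NMF, the corresponding relaxation for Semi-NMF, and then the two asserted advantages of the orthogonal variant. Starting with $G$-orthogonal NMF, I would take the rows of $\mathbf{X}\in\mathbb{R}^{n\times p}$ as the $n$ data points and write the K-means cost $\sum_{j=1}^{k}\sum_{i\in C_j}\|\mathbf{x}_i-\mathbf{m}_j\|^2$ in matrix form as $\|\mathbf{X}-\mathbf{F}\mathbf{M}^{\mathrm T}\|_F^2$, where the columns of $\mathbf{M}$ are the centroids and $\mathbf{F}\in\{0,1\}^{n\times k}$ is the hard-assignment matrix, which satisfies $\mathbf{F}^{\mathrm T}\mathbf{F}=\mathrm{diag}(n_1,\dots,n_k)$ with $n_j=|C_j|$. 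Rescaling the $j$-th column of $\mathbf{F}$ by $n_j^{-1/2}$ produces $\mathbf{H}$ with $\mathbf{H}\geqslant 0$ and $\mathbf{H}^{\mathrm T}\mathbf{H}=\mathbf{I}$, and absorbing the scales into the other factor, $\mathbf{G}=\mathbf{M}\,\mathrm{diag}(n_1^{1/2},\dots,n_k^{1/2})$, rewrites the cost as $\|\mathbf{X}-\mathbf{H}\mathbf{G}^{\mathrm T}\|_F^2$. Thus K-means is exactly $\min\|\mathbf{X}-\mathbf{H}\mathbf{G}^{\mathrm T}\|_F^2$ over the combinatorial set in which each row of $\mathbf{H}$ carries a single nonzero; enlarging this set to the continuous one $\{\mathbf{H}\geqslant 0,\ \mathbf{H}^{\mathrm T}\mathbf{H}=\mathbf{I}\}$ is precisely the $G$-orthogonal NMF model, which is therefore a relaxation. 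To make this transparent I would also eliminate the unconstrained factor: the normal equations give $\mathbf{G}=\mathbf{X}^{\mathrm T}\mathbf{H}$ when $\mathbf{H}^{\mathrm T}\mathbf{H}=\mathbf{I}$, and back-substitution turns the objective into $\mathrm{tr}(\mathbf{X}^{\mathrm T}\mathbf{X})-\mathrm{tr}(\mathbf{H}^{\mathrm T}\mathbf{X}\mathbf{X}^{\mathrm T}\mathbf{H})$, i.e. trace maximization over the nonnegative Stiefel manifold — the standard continuous surrogate for K-means.

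For Semi-NMF the same reformulation applies, with the remark that the centroid factor in the K-means form is already sign-unrestricted, so data mixed in sign force no change there. Keeping the assignment factor nonnegative but discarding \emph{both} its binary structure and the orthogonality — imposing only $\mathbf{H}\geqslant 0$ while leaving $\mathbf{G}$ free — recovers Semi-NMF, and this is again an enlargement of the K-means feasible set; eliminating $\mathbf{G}$ through $\mathbf{G}=\mathbf{X}^{\mathrm T}\mathbf{H}(\mathbf{H}^{\mathrm T}\mathbf{H})^{-1}$ exhibits its objective as a relaxation of the same trace problem.

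For the two advantages of $G$-orthogonal NMF: plain NMF is invariant under $\mathbf{H}\mathbf{G}^{\mathrm T}=(\mathbf{H}\mathbf{Q})(\mathbf{G}\mathbf{Q}^{-\mathrm T})^{\mathrm T}$ for every invertible $\mathbf{Q}$ that keeps both factors nonnegative — positive diagonal rescalings and permutations in particular — so the factorization is badly non-unique. Adjoining $\mathbf{H}^{\mathrm T}\mathbf{H}=\mathbf{I}$ forces $\mathbf{Q}^{\mathrm T}\mathbf{Q}=\mathbf{I}$, and an orthogonal $\mathbf{Q}$ for which both $\mathbf{H}$ and $\mathbf{H}\mathbf{Q}$ are entrywise nonnegative must be a permutation matrix; hence the residual ambiguity is only the unavoidable relabeling of clusters, which is the sense in which the solution is ``unique.'' For interpretability, at any feasible point $\mathbf{H}\geqslant 0$ and $\mathbf{H}^{\mathrm T}\mathbf{H}=\mathbf{I}$ say the columns of $\mathbf{H}$ are nonnegative and mutually orthogonal, hence have pairwise disjoint supports, so each row of $\mathbf{H}$ has at most one nonzero entry — exactly a hard cluster-membership indicator, with the columns of $\mathbf{G}$ serving as the cluster centroids; the factorization therefore reads off the clustering directly.

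The main obstacle, and the step I would treat most carefully, is the interpretability claim: the clean ``disjoint supports'' conclusion is exact only where $\mathbf{H}^{\mathrm T}\mathbf{H}=\mathbf{I}$ holds \emph{exactly}, whereas an iterative solver returns only an approximate optimizer, so the honest statement is that each row of $\mathbf{H}$ is dominated by one entry, and one must be explicit about whether the claim concerns the exact feasible set or near-optimal iterates. The uniqueness argument likewise hides a genericity hypothesis — all $k$ clusters nonempty and the induced columns of $\mathbf{H}$ of full disjoint support — without which an orthogonal nonnegative $\mathbf{Q}$ need not reduce to a permutation; I would state uniqueness under that hypothesis rather than unconditionally.
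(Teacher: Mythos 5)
Your proposal is correct in substance, but note that the paper itself offers no proof of this lemma: it is imported verbatim as a citation of Ding~\etal~\cite{ding2006orthogonal}, so there is nothing in the paper to compare against. Your reconstruction is the standard argument from the cited sources (rewrite the K-means cost as $\|\mathbf{X}-\mathbf{H}\mathbf{G}^{\mathrm T}\|_F^2$ over normalized indicator matrices, observe that the NMF feasible sets enlarge the combinatorial one, and derive uniqueness-up-to-permutation and the disjoint-support interpretability from $\mathbf{H}\geqslant 0$, $\mathbf{H}^{\mathrm T}\mathbf{H}=\mathbf{I}$), and the caveats you flag yourself (genericity for uniqueness, exact versus approximate orthogonality for interpretability) are exactly the right ones to state.
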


Taking into account the one-side orthogonal NMF, we relax the nonnegative constraints on $\mathbf X$ and $\mathbf G$.
Moreover, inspired by FMCNOF~\cite{yang2021fast}, we construct the anchor graph $\mathbf S$ obtained from the original data $\bf X$ as the input of matrix factorization. Compared with the original data, the number of anchors is much smaller, therefore, by adopting the anchor graph constructed by anchors and original data points as the input of matrix factorization, we can reduce the computational complexity of the algorithm effectively.
\begin{equation}
    \min_{\mathbf{H} \geqslant 0}{\left\| {\mathbf S} - {\mathbf{H}} {\mathbf{G}}^{\mathrm{T}} \right \|}_F^2, \quad\textrm{s.t.}  \quad \mathbf{H}^{\mathrm{T}} \mathbf{H}=\bf{I},
\end{equation}
where ${\mathbf S} \in {\mathbb{R}}^{{n} \times {m}}$, ${\mathbf{H}} \in {\mathbb{R}}^{{n} \times {k}}$ and ${\mathbf{G}} \in {\mathbb{R}}^{{m} \times {k}}$, $m$ is the number of anchors and we consider ${\mathbf{H}}$ as the cluster indicator matrix for clustering rows as described in~\cite{ding2006orthogonal}. We will introduce the details of anchor selection and the construction of the anchor graph in the appendix.

As described in the previous section, the existing NMF-based multi-view clustering methods are essentially a matrix factorization on a single view combined with the integration of multiple views. It causes the loss of the original spatial structure of the multi-view data. We extend NMF to the 3rd-order tensor, which can process the multi-view data directly and can also take full advantage of the original spatial structure of the multi-view data. The objective function of tensorial  one-side orthogonal non-negative matrix factorization is written in the following form:
\begin{equation}\label{tensorNMF}
    \min_{\bm{\mathcal H}\geqslant 0} {\left\| \bm{\mathcal S} - \bm{\mathcal H}\bm{\mathcal G}^{\mathrm{T}} \right \|}_F^2, \quad \textrm{s.t.} \quad \bm{\mathcal H}^{\mathrm{T}}\bm{\mathcal H} = \bm{\mathcal I},
\end{equation}
The 3rd-order tensor construction process is illustrated in Fig \ref{fig_tensor}.

\begin{figure}[t]
	\centering
	\includegraphics[width=0.75\linewidth]{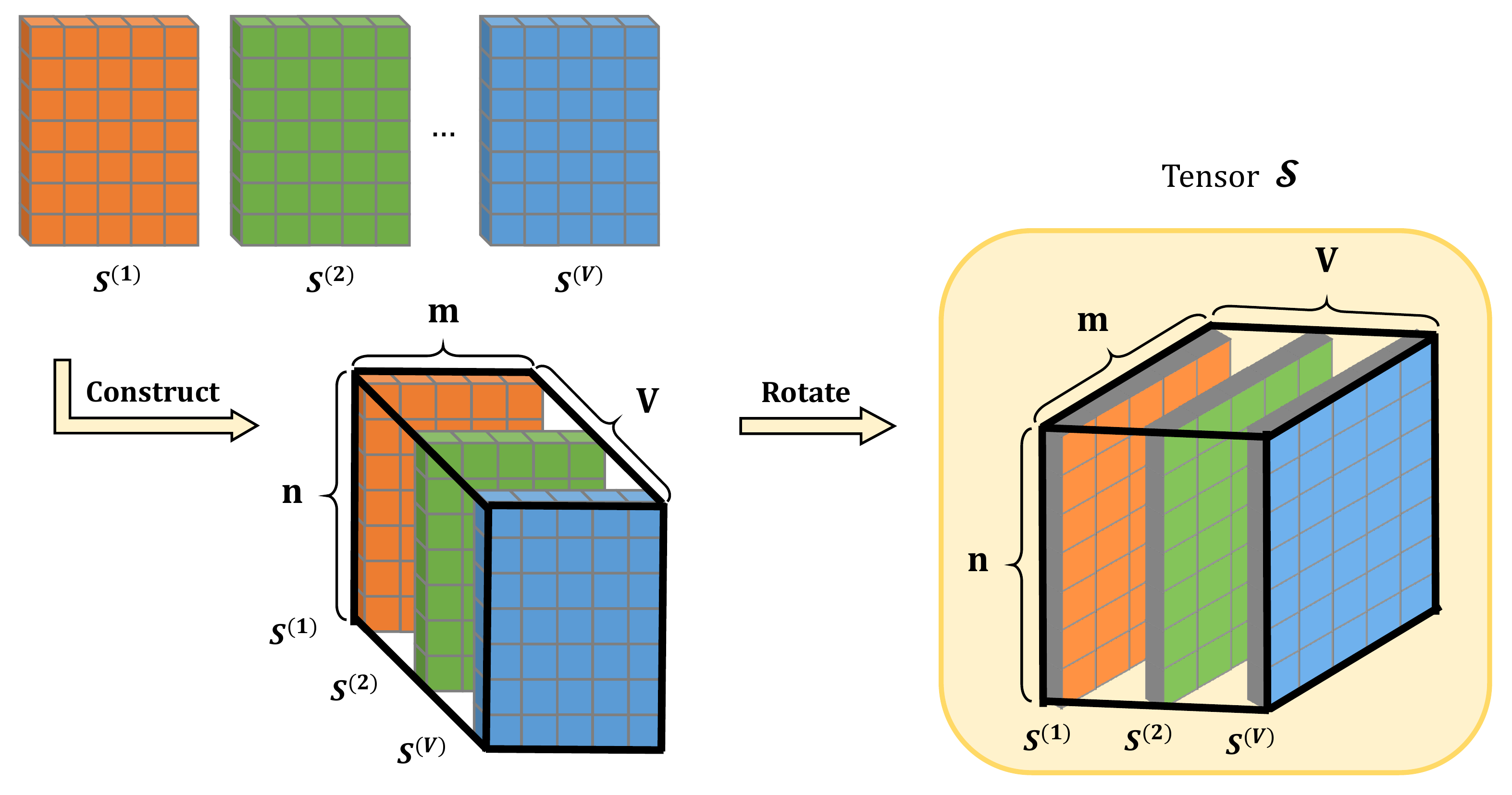}
	\caption{3rd-order tensor construction process.}
	\label{fig_tensor}
	\vspace{-5mm}
\end{figure}

After obtaining the cluster indicator matrices of different views ${\mathbf H^{(v)}}$, which is concatenated into a 3rd-order tensor $\bm{\mathcal H}$, we have to employ a strategy to push them into a consistent.
Especially motivated by the outstanding performance of the tensor Schatten $p$-norm~\cite{gao2020enhanced}, we minimize the tensor Schatten $p$-norm of $\bm{\mathcal H}$ aiming to diminish the divergence of the cluster indicator matrices from different views:
\begin{equation}\label{tensorIndicator}
    \min_{\bm{\mathcal H}\geqslant 0} {\left \|{\bm{\mathcal H}}\right \|}_\Sp^p,
\end{equation}
where $0<p \leqslant 1$ and the definition of the tensor Schatten $p$-norm is illustrated in Definition \ref{tensorSpNorm}.

The tensor Schatten $p$-norm can take full advantage of the complementary information embedded in different views, as well as the spatial structure of the multi-view data owing to the straightforward processing on the tensorial  form of the cluster indicator matrix.
By combining  (\ref{tensorNMF}) with  (\ref{tensorIndicator}), we reach the final objective function of the tensorial  one-side orthogonal NMF:
\begin{equation}
    \begin{aligned}
        &\min {\left\| \bm{\mathcal S} - \bm{\mathcal H}\bm{\mathcal G}^{\mathrm{T}} \right \|}_F^2 + \lambda {\left \|{\bm{\mathcal H}}\right \|}_\Sp^p  \\
        &\quad \emph{\textrm{s.t.}} \quad  \bm{\mathcal H}\geqslant 0, \bm{\mathcal H}^{\mathrm{T}}\bm{\mathcal H} = \bm{\mathcal I}
    \end{aligned}
\end{equation}
where $\lambda$ is the hyperparameter of the Schatten $p$-norm term.

\subsection{Optimization}
Inspired by Augmented Lagrange Multiplier (ALM), we introduce two auxiliary variables $\bm{\mathcal Q}$ and $\bm{\mathcal J}$ and let $\bm{\mathcal H} = \bm{\mathcal Q}$, $\bm{\mathcal H} = \bm{\mathcal J}$, respectively, where $\bm{\mathcal Q}^{\mathrm{T}} \bm{\mathcal Q} = \bm{\mathcal I}$. Then, we rewrite the model as the  following unconstrained problem:
\begin{equation}\label{objective function}
	\begin{aligned}
		&\min \bm{\mathcal L} (\bm{\mathcal Q}, \bm{\mathcal H}, \bm{\mathcal G}, \bm{\mathcal J}) \\
		&=\min_{\bm{\mathcal H}\geqslant 0, \bm{\mathcal Q}^{\mathrm{T}}\bm{\mathcal Q} = \bm{\mathcal I}} {\left\| \bm{\mathcal S} - \bm{\mathcal Q}\bm{\mathcal G}^{\mathrm{T}} \right \|}_F^2 + \lambda{\| \bm{\mathcal J}\|}_\Sp^p  \\
		& \quad + \frac{\mu}{2} {\left\| \bm{\mathcal H}-\bm{\mathcal Q} + \frac{\bm{\mathcal Y_1}}{\mu}\right \|}_F^2 + \frac{\rho}{2} {\left\| \bm{\mathcal H}-\bm{\mathcal J} + \frac{\bm{\mathcal Y_2}}{\rho}\right \|}_F^2,
	\end{aligned}
\end{equation}
where $\bm{\mathcal Y_1}$, $\bm{\mathcal Y_2}$ represent Lagrange multipliers and $\mu$, $\rho$ are the penalty parameters. The optimization process can therefore be separated into four steps:

$\bullet$\textbf{Solve $\bm{\mathcal G}$ with fixed $\bm{\mathcal Q}, \bm{\mathcal H}, \bm{\mathcal J}$.}  (\ref{objective function}) becomes:
\begin{equation}\label{solveG1}
	\begin{aligned}
		\min {\left\| \bm{\mathcal S} - \bm{\mathcal Q}\bm{\mathcal G}^{\mathrm{T}} \right \|}_F^2
	\end{aligned}
\end{equation}

After being implemented with discrete Fourier transform (DFT) along the third dimension.  the equivalent representation of  (\ref{solveG1}) in the frequency domain becomes:
\begin{equation}\label{solveG2}
	\begin{aligned}
		\min \sum_{v=1}^{V} {\left\| \bm{\mathcal {\overline{S}}}^{(v)} - \bm{\mathcal {\overline{Q}}}^{(v)}(\bm{\mathcal {\overline{G}}}^{(v)})^{\mathrm{T}}  \right \|}_F^2,
	\end{aligned}
\end{equation}
where $\bm{\mathcal {\overline{G}}} = \mathrm{fft}({{\bm{\mathcal G}}},[ \ ],3)$, and the others in the same way.

Let $\Phi= {\left\| \bm{\mathcal {\overline{S}}}^{(v)} - \bm{\mathcal {\overline{Q}}}^{(v)}(\bm{\mathcal {\overline{G}}}^{(v)})^{\mathrm{T}}  \right \|}_F^2$, we can obviously get the following equation:
\begin{equation}\label{solveG3}
    \begin{aligned}
        \Phi &= \mathrm{tr}\left((\bm{\mathcal {\overline{S}}}^{(v)})^{\mathrm{T}} \bm{\mathcal {\overline{S}}}^{(v)} \right ) - 2\mathrm{tr} \left((\bm{\mathcal {\overline{Q}}}^{(v)})^{\mathrm{T}} \bm{\mathcal {\overline{S}}}^{(v)} \bm{\mathcal {\overline{G}}}^{(v)} \right )  \\
        &+ \mathrm{tr} \left( (\bm{\mathcal {\overline{Q}}}^{(v)})^{\mathrm{T}} \bm{\mathcal {\overline{Q}}}^{(v)} \right).
    \end{aligned}
\end{equation}
Setting the derivative $\partial \Phi / \partial \bm{\mathcal {\overline{Q}}}^{(v)} = 0$ gives $ 2 \bm{\mathcal {\overline{G}}}^{(v)} - 2(\bm{\mathcal {\overline{S}}}^{(v)})^{\mathrm{T}} \bm{\mathcal {\overline{Q}}}^{(v)} = 0$. So the solution of  (\ref{solveG2}) is:
\begin{equation}\label{solveG}
    \bm{\mathcal {\overline{G}}}^{(v)} = (\bm{\mathcal {\overline{S}}}^{(v)})^{\mathrm{T}} \bm{\mathcal {\overline{Q}}}^{(v)}
\end{equation}

Spell the $\bm{\mathcal {\overline{G}}}^{(v)}$ into $\bm{\mathcal {\overline{G}}}$ and operate an inverse Fourier transform to get $\bm{\mathcal G}$, which is the solution of  (\ref{solveG1}).

$\bullet$\textbf{Solve $\bm{\mathcal Q}$ with fixed $\bm{\mathcal H}, \bm{\mathcal G}, \bm{\mathcal J}$.}  (\ref{objective function}) becomes:
\begin{equation}\label{solveQ1}
    \min_{\bm{\mathcal Q}^{\mathrm{T}}\bm{\mathcal Q} = \bm{\mathcal I}} {\left\| \bm{\mathcal S} - \bm{\mathcal Q}\bm{\mathcal G}^{\mathrm{T}} \right \|}_F^2 + \frac{\mu}{2} {\left\| \bm{\mathcal H}-\bm{\mathcal Q} + \frac{\bm{\mathcal Y}_1}{\mu}\right \|}_F^2
\end{equation}

And (\ref{solveQ1}) is equivalent to the following in the frequency domain:
\begin{small}
\begin{equation}\label{solveQ2}
    \begin{aligned}
        \min_{(\bm{\mathcal {\overline{Q}}}^{(v)})^{\mathrm{T}} \bm{\mathcal {\overline{Q}}}^{(v)} = \mathbf I}
        &\sum_{v=1}^{V}{\left\| \bm{\mathcal {\overline{S}}}^{(v)} - \bm{\mathcal {\overline{Q}}}^{(v)} (\bm{\mathcal {\overline{G}}}^{(v)})^{\mathrm{T}} \right \|}_F^2  \\
        & + \sum_{v=1}^{V} \frac{\mu}{2} {\left\| \bm{\mathcal {\overline{H}}}^{(v)} - \bm{\mathcal {\overline{Q}}}^{(v)} + \frac{\bm{\mathcal {\overline{Y}}}_1^{(v)}}{\mu}\right \|}_F^2,
    \end{aligned}
\end{equation}
\end{small}
where $\bm{\mathcal {\overline{Q}}} = \mathrm{fft}({{\bm{\mathcal Q}}},[\ ],3)$, and the others in the same way.

And (\ref{solveQ2}) can be obviously rewritten as:
\begin{small}
\begin{equation}\label{solveQ3}
    \begin{aligned}
        &\min_{(\bm{\mathcal {\overline{Q}}}^{(v)})^{\mathrm{T}} \bm{\mathcal {\overline{Q}}}^{(v)} = \mathbf I}
        \mathrm{tr}\left( (\bm{\mathcal {\overline{S}}}^{(v)} - \bm{\mathcal {\overline{Q}}}^{(v)} (\bm{\mathcal {\overline{G}}}^{(v)})^{\mathrm{T}})^{\mathrm{T}} (\bm{\mathcal {\overline{S}}}^{(v)} - \bm{\mathcal {\overline{Q}}}^{(v)} (\bm{\mathcal {\overline{G}}}^{(v)})^{\mathrm{T}}) \right) \\
        & + \frac{\mu}{2} \mathrm{tr} \left( (\bm{\mathcal {\overline{H}}}^{(v)} + \frac{\bm{\mathcal {\overline{Y}}}^{(v)}_1}{\mu} - \bm{\mathcal {\overline{Q}}}^{(v)})^{\mathrm{T}} (\bm{\mathcal {\overline{H}}}^{(v)} + \frac{\bm{\mathcal {\overline{Y}}}^{(v)}_1}{\mu} - \bm{\mathcal {\overline{Q}}}^{(v)}) \right),
    \end{aligned}
\end{equation}
\end{small}
and it is apparent that
\begin{small}
\begin{equation}\label{solveQ4}
    \min_{(\bm{\mathcal {\overline{Q}}}^{(v)})^{\mathrm{T}} \bm{\mathcal {\overline{Q}}}^{(v)} = \mathbf I}
    -2\mathrm{tr} \left( (\bm{\mathcal {\overline{Q}}}^{(v)})^{\mathrm{T}} \bm{\mathcal {\overline{S}}}^{(v)} \bm{\mathcal {\overline{G}}}^{(v)} \right) - \mu \mathrm{tr} \left( (\bm{\mathcal {\overline{Q}}}^{(v)})^{\mathrm{T}} \bm{\mathcal {\overline{W}}}_1^{(v)} \right)
\end{equation}
\end{small}
where $\bm{\mathcal {\overline{W}}}_1^{(v)} =  \bm{\mathcal {\overline{H}}}^{(v)} + \frac{ \bm{\mathcal {\overline{Y}}}^{(v)}_1}{\mu}$.

Moreover, (\ref{solveQ4}) can be rewritten in its equivalent form:
\begin{small}
\begin{equation}\label{solveQ5}
    \max_{(\bm{\mathcal {\overline{Q}}}^{(v)})^{\mathrm{T}} \bm{\mathcal {\overline{Q}}}^{(v)} = \mathbf I}
    2\mathrm{tr} \left( (\bm{\mathcal {\overline{Q}}}^{(v)})^{\mathrm{T}} \bm{\mathcal {\overline{S}}}^{(v)} \bm{\mathcal {\overline{G}}}^{(v)} \right) + \mu \mathrm{tr} \left( (\bm{\mathcal {\overline{Q}}}^{(v)})^{\mathrm{T}} \bm{\mathcal {\overline{W}}}_1^{(v)} \right),
\end{equation}
\end{small}
and it also can be reduced to:
\begin{equation}\label{solveQ6}
    \max_{(\bm{\mathcal {\overline{Q}}}^{(v)})^{\mathrm{T}} \bm{\mathcal {\overline{Q}}}^{(v)} = \mathbf I}
    \mathrm{tr} \left( (\bm{\mathcal {\overline{Q}}}^{(v)})^{\mathrm{T}} \bm{\mathcal {\overline{B}}}^{(v)} \right)
\end{equation}
where $\bm{\mathcal {\overline{B}}}^{(v)} = 2 \bm{\mathcal {\overline{S}}}^{(v)} \bm{\mathcal {\overline{G}}}^{(v)} + \mu \bm{\mathcal {\overline{W}}}_1^{(v)}$.

To solve (\ref{solveQ6}), we introduce the following Theorem:
\begin{theorem}\label{theorem solveF}
Given $\mathbf G$ and $\mathbf P$, where $\mathbf G (\mathbf G)^{\mathrm{T}}=\mathbf I$ and $\mathbf P$ has the singular value decomposition $\mathbf P=\mathbf \Lambda \mathbf S(\mathbf V)^{\mathrm{T}}$, then the optimal solution of
\begin{equation}\label{the1}
\max_{\mathbf G (\mathbf G)^{\mathrm{T}}=\mathbf I} \mathrm{tr}( \mathbf G \mathbf P)
\end{equation}
is $\mathbf G^\ast=\mathbf V[\mathbf I,\mathbf 0](\mathbf \Lambda)^{\mathrm{T}}$.
\end{theorem}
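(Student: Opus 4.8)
The plan is to reduce the problem to a transparent canonical form using the orthogonal invariance of the trace, and then to read off the optimizer from an elementary coordinatewise bound.

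First I would insert the singular value decomposition $\mathbf{P}=\mathbf{\Lambda}\mathbf{S}\mathbf{V}^{\mathrm{T}}$ into the objective and apply the cyclic property of the trace to obtain $\mathrm{tr}(\mathbf{G}\mathbf{P})=\mathrm{tr}(\mathbf{G}\mathbf{\Lambda}\mathbf{S}\mathbf{V}^{\mathrm{T}})=\mathrm{tr}(\mathbf{V}^{\mathrm{T}}\mathbf{G}\mathbf{\Lambda}\mathbf{S})=\mathrm{tr}(\mathbf{Z}\mathbf{S})$, where $\mathbf{Z}:=\mathbf{V}^{\mathrm{T}}\mathbf{G}\mathbf{\Lambda}$. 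The key point is that $\mathbf{G}\mapsto\mathbf{Z}$ is a bijection of the feasible set onto itself: since $\mathbf{\Lambda}$ and $\mathbf{V}$ are orthogonal, $\mathbf{Z}\mathbf{Z}^{\mathrm{T}}=\mathbf{V}^{\mathrm{T}}\mathbf{G}(\mathbf{\Lambda}\mathbf{\Lambda}^{\mathrm{T}})\mathbf{G}^{\mathrm{T}}\mathbf{V}=\mathbf{V}^{\mathrm{T}}(\mathbf{G}\mathbf{G}^{\mathrm{T}})\mathbf{V}=\mathbf{I}$, and conversely every row-orthonormal $\mathbf{Z}$ arises from a feasible $\mathbf{G}=\mathbf{V}\mathbf{Z}\mathbf{\Lambda}^{\mathrm{T}}$. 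Hence it suffices to maximize $\mathrm{tr}(\mathbf{Z}\mathbf{S})$ subject to $\mathbf{Z}\mathbf{Z}^{\mathrm{T}}=\mathbf{I}$.

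Next I would use that $\mathbf{S}$ is (rectangular) diagonal with nonnegative entries $\sigma_1\ge\sigma_2\ge\cdots\ge 0$, so that $\mathrm{tr}(\mathbf{Z}\mathbf{S})=\sum_i\sigma_i Z_{ii}$. Because each row of $\mathbf{Z}$ is a unit vector we have $|Z_{ii}|\le 1$, which immediately yields $\mathrm{tr}(\mathbf{Z}\mathbf{S})\le\sum_i\sigma_i$. Equality holds iff $Z_{ii}=1$ for every index with $\sigma_i>0$, and together with the unit-norm constraint this forces the corresponding rows of $\mathbf{Z}$ to be standard basis vectors; thus $\mathbf{Z}=[\mathbf{I},\mathbf{0}]$ is an optimal choice. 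Undoing the substitution, $\mathbf{V}^{\mathrm{T}}\mathbf{G}^{\ast}\mathbf{\Lambda}=[\mathbf{I},\mathbf{0}]$, i.e. $\mathbf{G}^{\ast}=\mathbf{V}[\mathbf{I},\mathbf{0}]\mathbf{\Lambda}^{\mathrm{T}}$, and one verifies directly that this matrix is feasible and attains the value $\sum_i\sigma_i$.

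I do not anticipate a genuine obstacle: the whole argument rests on the scalar inequality $|Z_{ii}|\le 1$ together with orthogonal invariance of the trace. The only points requiring care are bookkeeping ones, namely matching the block sizes of $[\mathbf{I},\mathbf{0}]$ to the (possibly rectangular) shapes of $\mathbf{G}$ and $\mathbf{S}$, and noting that the feasible Stiefel manifold is compact so that the maximum is actually attained.
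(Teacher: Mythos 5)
Your argument is correct and follows essentially the same route as the paper's own proof: substitute the SVD, use the cyclic property of the trace to reduce to maximizing $\mathrm{tr}(\mathbf{Z}\mathbf{S})$ over row-orthonormal $\mathbf{Z}=\mathbf{V}^{\mathrm{T}}\mathbf{G}\mathbf{\Lambda}$, bound the diagonal entries by $1$, and identify $\mathbf{Z}=[\mathbf{I},\mathbf{0}]$ as the maximizer. Your added remarks on the bijectivity of the change of variables and the block-size bookkeeping only make the same argument slightly more careful.
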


\begin{proof}
From the SVD $\mathbf P=\mathbf \Lambda \mathbf S (\mathbf V)^{\mathrm{T}}$ and together with  (\ref{the1}), it is evident that
\begin{equation}\label{solveF5}
\begin{aligned}
		\mathrm{tr}(\mathbf G \mathbf P ) &= \mathrm{tr}(\mathbf G\mathbf \Lambda^{(v)}\mathbf S(\mathbf V)^{\mathrm{T}}  ) \\
		&=\mathrm{tr}(\mathbf S (\mathbf V)^{\mathrm{T}} \mathbf G \mathbf \Lambda  ) \\
		&=\mathrm{tr}(\mathbf S \mathbf H )\\
		&=\sum_i s_{ii}  h_{ii}
	,\end{aligned}
\end{equation}
where $\mathbf H=(\mathbf V)^{\mathrm{T}} \mathbf G \mathbf \Lambda$, $s_{ii}$ and $h_{ii}$ are the $(i,i)$ elements of $\mathbf S$ and $\mathbf H$, respectively. It can be easily verified that $\mathbf H (\mathbf H)^{\mathrm{T}}=\mathbf I$, where $\mathbf I$ is an identity matrix. Therefore $-1\leqslant h_{ii} \leqslant 1$ and $s_{ii} \geqslant 0$, Thus we have:
\begin{equation}\label{solveF6}
\mathrm{tr}(\mathbf G \mathbf P)=\sum_i s_{ii} h_{ii} \leqslant \sum_i s_{ii}.
\end{equation}
The equality holds when $\mathbf H$ is an identity matrix. So $\mathrm{tr}(\mathbf G \mathbf P)$ reaches the maximum when $\mathbf H = [\mathbf I,\mathbf 0]$.
\end{proof}

So the solution of  (\ref{solveQ6}) is:
\begin{equation}\label{solveQ}
    \bm{\mathcal {\overline{Q}}}^{(v)} = \bm{{\overline{\Lambda}}}^{(v)} (\bm{{\overline{V}}}^{(v)})^{\mathrm{T}}
\end{equation}

Spell the $\bm{\mathcal {\overline{Q}}}^{(v)}$ into $\bm{\mathcal {\overline{Q}}}$ and operate an inverse Fourier transform to get $\bm{\mathcal Q}$, which is the solution of  (\ref{solveQ1}).

$\bullet$\textbf{Solve $\bm{\mathcal H}$ with fixed $\bm{\mathcal Q}, \bm{\mathcal G}, \bm{\mathcal J}$.}  (\ref{objective function}) becomes:
\begin{equation}\label{solveH1}
	\begin{aligned}
		\min_{\bm{\mathcal H}\geqslant 0} \frac{\mu}{2} {\left\| \bm{\mathcal H}-\bm{\mathcal Q} + \frac{\bm{\mathcal Y}_1}{\mu}\right \|}_F^2 + \frac{\rho}{2} {\left\| \bm{\mathcal H}-\bm{\mathcal J} + \frac{\bm{\mathcal Y}_2}{\rho}\right \|}_F^2
	\end{aligned}
\end{equation}

%We can rewrite  (\ref{solveH1}) as a form in the frequency domain:
%\begin{equation}\label{solveH2}
%	\begin{aligned}
%		\min_{\bm{\mathcal H}\geqslant 0}
%        &\sum_{v=1}^{V} \frac{\mu}{2} {\left\| \bm{\mathcal {\overline{H}}}^{(v)} - \bm{\mathcal {\overline{Q}}}^{(v)} + \frac{\bm{\mathcal {\overline{Y}}}_1^{(v)}}{\mu}\right \|}_F^2  \\
%        & + \sum_{v=1}^{V} \frac{\rho}{2} {\left\| \bm{\mathcal {\overline{H}}}^{(v)} - \bm{\mathcal {\overline{J}}}^{(v)} + \frac{\bm{\mathcal {\overline{Y}}}_2^{(v)}}{\rho}\right \|}_F^2,
%	\end{aligned}
%\end{equation}
%where $\bm{\mathcal {\overline{H}}} = \mathrm{fft}({{\bm{\mathcal H}}},[ ],3)$, and the other 3rd-order tensors in  (\ref{solveH1}) are also expressed in the frequency domain in the same way.

 (\ref{solveH1}) is obviously equivalent to:
\begin{equation}\label{solveH2}
	\begin{aligned}
		\min_{\bm{\mathcal H}\geqslant 0}
        (\frac{\mu + \rho}{2}) {\left\| \bm{\mathcal {H}} - \frac{\mu \bm{\mathcal {W}}_2 + \rho \bm{\mathcal {W}}_3}{\mu + \rho} \right \|}_F^2
	\end{aligned}
\end{equation}
where $\bm{\mathcal {W}}_2 =  \bm{\mathcal {Q}} - \frac{ \bm{\mathcal {Y}}_1}{\mu}$ and $\bm{\mathcal {W}}_3 = \bm{\mathcal {J}} - \frac{ \bm{\mathcal {Y}}_2}{\rho}$.

Then the solution of  (\ref{solveH2}) is:
\begin{equation}\label{solveH}
    \bm{\mathcal {H}} = \left(\frac{\mu \bm{\mathcal {W}}_2 + \rho \bm{\mathcal {W}}_3}{\mu + \rho} \right)_+
\end{equation}

%Spell the $\bm{\mathcal {\overline{H}}}^{(v)}$ into $\bm{\mathcal {\overline{H}}}$ and operate an inverse Fourier transform to get $\bm{\mathcal H}$, which is the solution of  (\ref{solveH1}).

$\bullet$\textbf{Solve $\bm{\mathcal J}$ with fixed $\bm{\mathcal Q}, \bm{\mathcal H}, \bm{\mathcal G}$.}  (\ref{objective function}) becomes:
\begin{equation}\label{solveJ1}
	\begin{aligned}
		\min \lambda{\| \bm{\mathcal J}\|}_\Sp^p + \frac{\rho}{2} {\left\| \bm{\mathcal H}-\bm{\mathcal J} + \frac{\bm{\mathcal Y}_2}{\rho}\right \|}_F^2,
	\end{aligned}
\end{equation}
after completing the square regarding $\bm{\mathcal J}$, we can deduce
\begin{equation}\label{solve J}
	\begin{aligned}
		\bm{\mathcal J}^* = \arg \min \frac{1}{2}\left\|{\bm{\mathcal H} + \frac{\bm{\mathcal Y}_2}{\rho} - \bm{\mathcal J}}\right\|_F^2 + \frac{\lambda}{\rho}{\|\bm{\mathcal J}\|}_\Sp^p,
	\end{aligned}
\end{equation}
which has a closed-form solution as Lemma \ref{T2} \cite{gao2020enhanced}:

\begin{lemma}\label{T2}
    Let ${\mathcal Z} \in {\mathbb{R}}{^{{n_1} \times {n_2} \times {n_3}}}$ have a t-SVD ${\mathcal Z} = {\mathcal U} * {\mathcal S} * {{\mathcal V}^{\mathrm{T}}}$, then the optimal solution for
    \begin{equation}\label{tensor-gaozx-2020}
        \begin{array}{l}
            \mathop {\min }\limits_{\mathcal X} \frac{1}{2}\left\| {{\mathcal X} - {\mathcal Z}} \right\|_F^2 + \tau \left\| {\mathcal X} \right\|_{{\Sp}}^p.
        \end{array}
    \end{equation}
is ${{\cal X}^*} = {\Gamma _\tau }({\cal Z}) = {\cal U}*\mathrm{ifft}({P_\tau }(\overline {\cal Z} ))*{{\cal V}^{\mathrm{T}}}$, where ${P_\tau }(\overline {\cal Z} )$ is an f-diagonal  3rd-order tensor, whose diagonal elements can be found by using the GST algorithm introduced in \cite{gao2020enhanced}.
\end{lemma}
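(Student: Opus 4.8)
The plan is to diagonalize the problem by passing to the Fourier domain along the third mode, where both terms of the objective decouple over the frontal slices, and then to reduce each slice subproblem to a separable scalar program on singular values that the GST operator of~\cite{gao2020enhanced} solves in closed form.

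First I would invoke the two identities that tie the tensorial objective to its Fourier-domain counterpart. Since the DFT along the third mode satisfies $\|\bm{\mathcal X}-\bm{\mathcal Z}\|_F^2=\frac{1}{n_3}\sum_{i=1}^{n_3}\|\overline{\bm{\mathcal X}}^{(i)}-\overline{\bm{\mathcal Z}}^{(i)}\|_F^2$ and, by Definition~\ref{tensorSpNorm}, $\|\bm{\mathcal X}\|_{\Sp}^p=\sum_{i=1}^{n_3}\|\overline{\bm{\mathcal X}}^{(i)}\|_{\Sp}^p$, problem~(\ref{tensor-gaozx-2020}) splits into $n_3$ independent matrix problems, the $i$-th of which (after the slicewise rescaling by $n_3$) reads
\begin{equation*}
\min_{\mathbf X_i}\ \tfrac12\bigl\|\mathbf X_i-\overline{\bm{\mathcal Z}}^{(i)}\bigr\|_F^2+\tau n_3\,\|\mathbf X_i\|_{\Sp}^p .
\end{equation*}
Conjugate symmetry of the frontal slices of $\overline{\bm{\mathcal Z}}$ guarantees that, once the $\mathbf X_i^\ast$ are assembled and inverse-transformed, the resulting tensor is real, so it suffices to solve one such matrix subproblem.

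Next I would show that an optimal $\mathbf X_i$ must be simultaneously diagonalizable with $\overline{\bm{\mathcal Z}}^{(i)}$. Write the SVD $\overline{\bm{\mathcal Z}}^{(i)}=\overline{\bm{\mathcal U}}^{(i)}\overline{\bm{\mathcal S}}^{(i)}(\overline{\bm{\mathcal V}}^{(i)})^{\mathrm{T}}$, which is exactly the $i$-th frontal slice of the t-SVD $\bm{\mathcal Z}=\bm{\mathcal U}*\bm{\mathcal S}*\bm{\mathcal V}^{\mathrm{T}}$. Holding the singular values of $\mathbf X_i$ fixed, the von Neumann trace inequality gives $\mathrm{tr}(\mathbf X_i^{\mathrm{T}}\overline{\bm{\mathcal Z}}^{(i)})\le\sum_j\sigma_j(\mathbf X_i)\,\sigma_j(\overline{\bm{\mathcal Z}}^{(i)})$, with equality iff the two matrices share aligned singular vectors; hence $\|\mathbf X_i-\overline{\bm{\mathcal Z}}^{(i)}\|_F^2\ge\sum_j(\sigma_j(\mathbf X_i)-\sigma_j(\overline{\bm{\mathcal Z}}^{(i)}))^2$, while the penalty $\|\mathbf X_i\|_{\Sp}^p=\sum_j\sigma_j(\mathbf X_i)^p$ depends only on the singular values. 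Thus at the optimum the singular vectors align and the subproblem collapses to the separable scalar program $\min_{\delta_j\ge0}\sum_j\bigl[\tfrac12(\delta_j-\sigma_j)^2+\tau n_3\,\delta_j^p\bigr]$ with $\sigma_j=\sigma_j(\overline{\bm{\mathcal Z}}^{(i)})$.

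Finally, each scalar problem $\min_{\delta\ge0}\tfrac12(\delta-\sigma)^2+\tau n_3\,\delta^p$ is, for $0<p\le1$, precisely the one handled by the generalized soft-thresholding (GST) procedure of~\cite{gao2020enhanced}; quoting that result supplies $\delta_j^\ast$, and packing the $\delta_j^\ast$ into an f-diagonal tensor produces $P_\tau(\overline{\bm{\mathcal Z}})$. Consequently $\mathbf X_i^\ast=\overline{\bm{\mathcal U}}^{(i)}\,P_\tau(\overline{\bm{\mathcal Z}}^{(i)})\,(\overline{\bm{\mathcal V}}^{(i)})^{\mathrm{T}}$, and re-expressing the slicewise products as a t-product after the inverse DFT yields $\bm{\mathcal X}^\ast=\bm{\mathcal U}*\mathrm{ifft}(P_\tau(\overline{\bm{\mathcal Z}}))*\bm{\mathcal V}^{\mathrm{T}}$. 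I expect the middle step to be the main obstacle: rigorously justifying the matrix-to-scalar reduction via the von Neumann inequality (including the consistent descending ordering of singular values that makes the alignment argument valid). A secondary subtlety is that $\delta\mapsto\delta^p$ is nonconvex for $p<1$, so one must lean on the GST analysis to certify that the thresholding value is the \emph{global} minimizer of each scalar problem, not merely a stationary point.
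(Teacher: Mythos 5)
The paper does not actually prove Lemma~\ref{T2}; it imports the result verbatim from \cite{gao2020enhanced}, so there is no in-paper proof to compare against. Your blind reconstruction is nevertheless correct and is essentially the standard argument used in that reference: Parseval reduction to per-slice problems in the Fourier domain (with the $n_3$ rescaling of $\tau$ handled correctly), the von Neumann trace inequality to force alignment of singular vectors and collapse each slice to a separable scalar program over singular values, and the GST operator to certify the global scalar minimizers despite the nonconvexity of $\delta\mapsto\delta^p$; the monotonicity of GST in $\sigma$ is what preserves the descending order you flag as the delicate point. The only technicality worth adding is that the frontal slices $\overline{\bm{\mathcal Z}}^{(i)}$ are complex, so the expansion of $\|\mathbf X_i-\overline{\bm{\mathcal Z}}^{(i)}\|_F^2$ should use the Hermitian transpose and the real part of the trace, with von Neumann applied in its complex form --- this does not change the conclusion.
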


Now the solution of  (\ref{solve J}) is:
\begin{equation}\label{21}
    \bm{\mathcal J}^* = {\Gamma _{\frac{\lambda}{\rho}}} (\bm{\mathcal H} + \frac{\bm{\mathcal Y}_2}{\rho}).
\end{equation}

Finally, the optimization procedure for Multi-View Clustering via Semi-non-negative Tensor Factorization (Semi-NTF) is outlined in Algorithm 1.

\begin{algorithm}[tb]
\caption{Multi-View Clustering via Semi-non-negative Tensor Factorization (Semi-NTF)}
\label{A1}
\textbf{Input}: Data matrices $\{{\mathbf{X}}^{(v)}\}_{v=1}^{V}\in \mathbb{R}^{N\times d_v}$; anchors numbers $m$; cluster number $K$.\\
\textbf{Output}: Cluster labels $\mathbf{Y}$ of each data points.\\
\textbf{Initialize}: $\mu=10^{-5}$, $\rho=10^{-5}$, $\eta=1.6$, $\bm{\mathcal Y}_1=0$, $\bm{\mathcal Y}_2=0$, $\mathbf{\overline{Q}}^{(v)}$ is identity matrix;
\begin{algorithmic}[1] %[1] enables line numbers
\STATE Compute graph matrix $\mathbf S^{(v)}$ of each views;
\WHILE{not condition}
\STATE Update $\bm{\mathcal {\overline{G}}}^{(v)}$ by solving  (\ref{solveG});
\STATE Update $\bm{\mathcal {\overline{Q}}}^{(v)}$ by solving  (\ref{solveQ});
\STATE Update $\bm{\mathcal {\overline{H}}}^{(v)}$ by solving  (\ref{solveH});
\STATE Update ${\bm{{\mathcal J}}}$ by using  (\ref{solve J});
\STATE Update $\bm{\mathcal Y}_1$, $\bm{\mathcal Y}_2$, $\mu$ and $\rho$: $\bm{\mathcal Y}_1=\bm{\mathcal Y}_1+\mu(\bm{\mathcal H}-\bm{\mathcal Q})$, $\bm{\mathcal Y}_2=\bm{\mathcal Y}_2+\mu(\bm{\mathcal H}-\bm{\mathcal J})$, $\mu=\eta\mu$, $\rho=\eta\rho$;
\ENDWHILE
\STATE Calculate the $K$ clusters by using \\
$\mathbf H=\sum_{v=1}^V \mathbf H^{(v)} / V$;
\STATE \textbf{return} Clustering result.
\end{algorithmic}
\end{algorithm}

\subsection{Convergence Analysis}
\begin{theorem}\label{thm1}[Convergence Analysis of Algorithm~\ref{A1}]
Let $\mathcal{P}_{k}=\{\bm{{\mathcal{Q}}}_{k},\bm{{\mathcal{H}}}_{k}, \bm{{\mathcal{G}}}_{k}, \bm{{\mathcal{J}}}_{k}, \bm{{\mathcal{Y}}}_{2,k}, \bm{{\mathcal{Y}}}_{1,k}\},\ 1\leq k< \infty$ in \eqref{objective function} be a sequence generated
by \textbf{Algorithm~1}, then
\begin{enumerate}
\item  $\mathcal{P}_{k}$ is bounded with the assumption $\lim_{k\rightarrow 0}\max\{\mu_k,\rho_k\}({\bar{\mathcal H}_{k+1}^{(v)}}-{\bar{\mathcal H}_k^{(v)}})=0$;
\item  Any accumulation point of $\mathcal{P}_{k}$ is a stationary KKT point of \eqref{objective function}.
\end{enumerate}
\end{theorem}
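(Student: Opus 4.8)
I will follow the standard two-stage argument for ALM/ADMM schemes with a nonconvex, nonsmooth regularizer: first show that every block of iterates and both multiplier tensors remain in a bounded set, then invoke Bolzano--Weierstrass and pass to the limit in the per-block optimality conditions to recover the KKT system of \eqref{objective function}. Everything is done in the DFT domain, where the $\bm{\mathcal Q}$- and $\bm{\mathcal G}$-subproblems decouple into $n_3$ independent slice problems and the $\bm{\mathcal H}$-, $\bm{\mathcal J}$-updates are the closed forms (\ref{solveH}) and (\ref{solve J}). Boundedness of $\{\bm{\mathcal Q}_k\}$ is immediate: by (\ref{solveQ}) each frontal slice of $\bm{\mathcal {\overline Q}}_k$ has orthonormal columns, so $\|\bm{\mathcal Q}_k\|_F$ is a fixed constant; feeding this into (\ref{solveG}), $\bm{\mathcal {\overline G}}^{(v)}_k=(\bm{\mathcal {\overline S}}^{(v)})^{\mathrm T}\bm{\mathcal {\overline Q}}^{(v)}_k$, and since $\bm{\mathcal S}$ is a fixed input, $\{\bm{\mathcal G}_k\}$ is bounded too.

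\textbf{Boundedness of the multipliers and of $\bm{\mathcal H}_k,\bm{\mathcal J}_k$.} Writing the first-order optimality condition of the $\bm{\mathcal J}$-subproblem (\ref{solve J}) and eliminating the residual with the update $\bm{\mathcal Y}_{2,k+1}=\bm{\mathcal Y}_{2,k}+\rho_k(\bm{\mathcal H}_{k+1}-\bm{\mathcal J}_{k+1})$ yields $\bm{\mathcal Y}_{2,k+1}\in\lambda\,\partial\|\bm{\mathcal J}_{k+1}\|_{\Sp}^p$; using the explicit GST form of $\Gamma_{\lambda/\rho_k}$ from Lemma~\ref{T2}, one rewrites this as $\bm{\mathcal Y}_{2,k+1}=\rho_k(\bm{\mathcal Z}_k-\Gamma_{\lambda/\rho_k}(\bm{\mathcal Z}_k))$ with $\bm{\mathcal Z}_k=\bm{\mathcal H}_{k+1}+\bm{\mathcal Y}_{2,k}/\rho_k$, and the shrinkage residual is $O(\lambda/\rho_k)$ once the singular values of $\bm{\mathcal Z}_k$ stay bounded away from zero (which holds because, up to an $o(1)$ term, $\bm{\mathcal {\overline H}}^{(v)}$ inherits the unit singular values of the orthonormal $\bm{\mathcal {\overline Q}}^{(v)}$), so $\{\bm{\mathcal Y}_{2,k}\}$ is bounded. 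For $\bm{\mathcal Y}_1$, combine the nonnegativity-constrained KKT condition of the $\bm{\mathcal H}$-subproblem (\ref{solveH1}) with the updates of $\bm{\mathcal Y}_1,\bm{\mathcal Y}_2$ and with the standing assumption $\lim_k\max\{\mu_k,\rho_k\}(\bm{\mathcal {\overline H}}^{(v)}_{k+1}-\bm{\mathcal {\overline H}}^{(v)}_k)=0$, which (since $\lambda/\rho_k\to 0$ makes $\Gamma_{\lambda/\rho_k}$ tend to the identity) controls $\rho_k(\bm{\mathcal J}_{k+1}-\bm{\mathcal J}_k)$ and hence pins down $\bm{\mathcal Y}_{1,k+1}$ in terms of already-bounded quantities. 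With $\bm{\mathcal Q}_k,\bm{\mathcal Y}_{1,k},\bm{\mathcal Y}_{2,k}$ bounded and $\mu_k,\rho_k\ge10^{-5}$, the closed form (\ref{solveH}) bounds $\{\bm{\mathcal H}_k\}$, and then $\bm{\mathcal J}_{k+1}=\Gamma_{\lambda/\rho_k}(\bm{\mathcal H}_{k+1}+\bm{\mathcal Y}_{2,k}/\rho_k)$ is bounded because $\Gamma_\tau$ never increases singular values. This establishes claim~(1).

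\textbf{KKT property of accumulation points.} Boundedness of $\{\mathcal P_k\}$ gives a subsequence $\mathcal P_{k_j}\to\mathcal P^\ast=(\bm{\mathcal Q}^\ast,\bm{\mathcal H}^\ast,\bm{\mathcal G}^\ast,\bm{\mathcal J}^\ast,\bm{\mathcal Y}_1^\ast,\bm{\mathcal Y}_2^\ast)$. Since $\eta=1.6>1$ we have $\mu_k,\rho_k\to\infty$, so boundedness of the multipliers forces $\bm{\mathcal H}_{k+1}-\bm{\mathcal Q}_{k+1}=(\bm{\mathcal Y}_{1,k+1}-\bm{\mathcal Y}_{1,k})/\mu_k\to 0$ and likewise $\bm{\mathcal H}_{k+1}-\bm{\mathcal J}_{k+1}\to 0$; hence $\bm{\mathcal H}^\ast=\bm{\mathcal Q}^\ast=\bm{\mathcal J}^\ast$, $(\bm{\mathcal Q}^\ast)^{\mathrm T}\bm{\mathcal Q}^\ast=\bm{\mathcal I}$ and $\bm{\mathcal H}^\ast\ge 0$, i.e.\ primal feasibility of the original model. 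Passing to the limit along $k_j$ in the four per-block optimality conditions --- (\ref{solveG}) for $\bm{\mathcal G}$, the orthogonality-constrained stationarity of Theorem~\ref{theorem solveF} for $\bm{\mathcal Q}$, the nonnegativity KKT condition for $\bm{\mathcal H}$, and $\bm{\mathcal Y}_2^\ast\in\lambda\,\partial\|\bm{\mathcal J}^\ast\|_{\Sp}^p$ for $\bm{\mathcal J}$ --- and again using the assumption to discard the $\mu_k(\bm{\mathcal H}_{k+1}-\bm{\mathcal H}_k)$-type remainders, one checks that these coincide with the stationarity, complementarity and feasibility conditions of \eqref{objective function}, which proves claim~(2).

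\textbf{Main obstacle.} The delicate point is the multiplier bound: the Schatten-$p$ term is nonconvex and nonsmooth with $\partial\|\cdot\|_{\Sp}^p$ blowing up near zero singular values, so boundedness of $\bm{\mathcal Y}_2$ cannot be read off a naive subgradient estimate and must be extracted from the explicit GST shrinkage together with the fact that $\bm{\mathcal H}$ asymptotically has near-unit singular values; likewise, turning the technical hypothesis on $\max\{\mu_k,\rho_k\}(\bm{\mathcal {\overline H}}^{(v)}_{k+1}-\bm{\mathcal {\overline H}}^{(v)}_k)$ into a genuine bound on $\bm{\mathcal Y}_1$ and into vanishing remainders in the limit requires care. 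The remaining steps are routine manipulations of the four optimality conditions.
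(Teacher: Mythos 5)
Your overall architecture (bound every block and both multipliers, then pass to the limit in the per-block optimality conditions) matches the paper, and your part (2) is essentially the paper's argument. But your route to the single hardest estimate --- boundedness of $\bm{\mathcal Y}_{2,k}$ --- is different from the paper's and contains a genuine gap. You write $\bm{\mathcal Y}_{2,k+1}=\lambda\nabla_{\bm{\mathcal J}}\|\bm{\mathcal J}_{k+1}\|^p_{\Sp}=\rho_k\bigl(\bm{\mathcal Z}_k-\Gamma_{\lambda/\rho_k}(\bm{\mathcal Z}_k)\bigr)$ and claim the shrinkage residual is $O(\lambda/\rho_k)$ ``once the singular values of $\bm{\mathcal Z}_k$ stay bounded away from zero,'' justifying that proviso by saying $\bm{\mathcal {\overline H}}^{(v)}$ ``inherits the unit singular values of the orthonormal $\bm{\mathcal {\overline Q}}^{(v)}$.'' That inheritance is not established: $\bm{\mathcal H}_{k+1}$ is the entrywise positive part of a convex combination of $\bm{\mathcal Q}_k-\bm{\mathcal Y}_{1,k}/\mu_k$ and $\bm{\mathcal J}_k-\bm{\mathcal Y}_{2,k}/\rho_k$, and the projection $(\cdot)_+$ can collapse singular values arbitrarily close to zero. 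This matters because for $0<p<1$ the Schatten-$p$ (sub)gradient $p\sigma^{p-1}$ blows up at $\sigma=0$; equivalently, on the components that GST thresholds to zero the residual satisfies $\rho_k\sigma\sim\rho_k(\lambda/\rho_k)^{1/(2-p)}\to\infty$. So without the bounded-away-from-zero claim your bound on $\bm{\mathcal Y}_{2,k}$ fails, and with it your subsequent bounds on $\bm{\mathcal H}_k$ and $\bm{\mathcal J}_k$ (which you derive from the closed forms assuming the multipliers are already controlled) also fail. Your treatment of $\bm{\mathcal Y}_{1,k}$ has the same character: ``pins down $\bm{\mathcal Y}_{1,k+1}$ in terms of already-bounded quantities'' is asserted rather than derived.

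The paper circumvents exactly this singularity by a different device: it replaces $\partial|\eta|^p$ near $\eta=0$ with the smoothed surrogate $p\eta/\max\{\epsilon^{2-p},|\eta|^{2-p}\}$, which is uniformly bounded by $p/\epsilon^{1-p}$, and invokes Lemma~\ref{lewis} (the Lewis formula for gradients of spectral functions) to conclude $\nabla_{\bm{\mathcal J}}\|\bm{\mathcal J}\|^p_{\Sp}$ --- hence $\bm{\mathcal Y}_{2,k}$ --- is bounded with no assumption on the singular values of the iterates. It then bounds $\bm{\mathcal J}_k$ not through non-expansiveness of $\Gamma_\tau$ but through a telescoping descent inequality \eqref{eq:Lk_ieq}--\eqref{eq:Lk_sum} on the augmented Lagrangian, using the summability $\sum_k(\rho_k+\rho_{k-1})/(2\rho_{k-1}^2)<\infty$ afforded by the geometric growth $\rho_k=\eta^k\rho_0$: boundedness of $\bm{\mathcal L}$ forces each nonnegative term of \eqref{eq:Ln_bdd}, in particular $\|\bm{\mathcal J}_{n+1}\|^p_{\Sp}$, to be bounded, which bounds all singular values and hence $\|\bm{\mathcal J}_{n+1}\|_F$. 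You omit this Lagrangian summability argument entirely, yet it is the load-bearing step of the paper's part (1). (On the credit side, your observation that $\bm{\mathcal Q}_k$ is automatically bounded because each $\bm{\mathcal {\overline Q}}^{(v)}_k$ has orthonormal columns is cleaner than the paper's detour through the multiplier update, and your part (2) limit-passing is sound given part (1).) To repair your proof you would either have to prove the singular values of $\bm{\mathcal Z}_k$ are uniformly bounded below --- which does not follow from anything in the algorithm --- or adopt the paper's smoothing-plus-summability strategy.
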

The proof will be provided in the appendix and we need to mention that the KKT conditions can be used to determine the stop conditions for Algorithm \ref{A1}, which are
$\| \bm{{\mathcal{Q}}}_{k}-\bm{{\mathcal{H}}}_{k} \|_\infty\leq\varepsilon$, $\| \bm{{\mathcal{Q}}}_{k}-\bm{{\mathcal{J}}}_{k} \|_\infty\leq\varepsilon$.

\begin{table*}[t]
\caption{Clustering performance on Mnist4 and NUS. (The best result is in bold, and the second-best result is underlined.)}
\label{clusteringResultOnMnist4NUS}
\centering
\scalebox{0.7}
{
\begin{tabular}{ccccccccc}
\toprule
Dataset&\multicolumn{7}{c}{\textbf{Mnist4}}\\
\midrule
Metrics &ACC &NMI &Purity &PER &REC &F-score &ARI\\
\midrule
AMGL~\cite{nie2016parameter} &\underline{0.921$\pm$0.000} &0.806$\pm$0.000 &\underline{0.921$\pm$0.000} &\underline{0.854$\pm$0.000} &\underline{0.862$\pm$0.000} &\underline{0.858$\pm$0.000} &\underline{0.810$\pm$0.000} \\
MVGL~\cite{zhan2017graph} &0.919$\pm$0.000 &0.803$\pm$0.000 &0.919$\pm$0.000 &0.851$\pm$0.000 &0.860$\pm$0.000 &0.856$\pm$0.000 &0.807$\pm$0.000 \\
CSMSC~\cite{luo2018consistent} &0.641$\pm$0.000 &0.601$\pm$0.000 &0.728$\pm$0.000 &0.607$\pm$0.000 &0.767$\pm$0.000 &0.677$\pm$0.000 &0.553$\pm$0.000 \\
GMC~\cite{wang2019gmc} &0.920$\pm$0.000 &\underline{0.807$\pm$0.000} &0.920$\pm$0.000 &0.853$\pm$0.000 &0.861$\pm$0.000 &0.857$\pm$0.000 &0.809$\pm$0.000 \\
LMVSC~\cite{kang2020large} &0.892$\pm$0.000 &0.726$\pm$0.000 &0.892$\pm$0.000 &0.808$\pm$0.000 &0.812$\pm$0.000 &0.810$\pm$0.000 &0.747$\pm$0.000 \\
SMSC~\cite{hu2020multi} &0.909$\pm$0.000 &0.774$\pm$0.000 &0.909$\pm$0.000 &0.834$\pm$0.000 &0.841$\pm$0.000 &0.837$\pm$0.000 &0.783$\pm$0.000 \\
SFMC~\cite{li2020multiview} &0.916$\pm$0.000 &0.797$\pm$0.000 &0.916$\pm$0.000 &0.846$\pm$0.000 &0.855$\pm$0.000 &0.850$\pm$0.000 &0.800$\pm$0.000 \\
FMCNOF~\cite{yang2021fast} &0.663$\pm$0.024 &0.455$\pm$0.033 &0.654$\pm$0.016 &0.510$\pm$0.036 &0.677$\pm$0.071 &0.580$\pm$0.034 &0.413$\pm$0.047 \\
Semi-NTF &\textbf{0.990$\pm$0.000} &\textbf{0.962$\pm$0.000} &\textbf{0.990$\pm$0.000} &\textbf{0.981$\pm$0.000} &\textbf{0.981$\pm$0.000} &\textbf{0.981$\pm$0.000} &\textbf{0.975$\pm$0.000} \\
\midrule
Dataset&\multicolumn{7}{c}{\textbf{NUS}}\\
\midrule
Metrics &ACC &NMI &Purity &PER &REC &F-score &ARI\\
\midrule
AMGL~\cite{nie2016parameter} &0.214$\pm$0.000 &0.121$\pm$0.000 &0.232$\pm$0.000 &0.103$\pm$0.000 &0.370$\pm$0.000 &0.161$\pm$0.000 &0.036$\pm$0.000 \\
MVGL~\cite{zhan2017graph} &0.145$\pm$0.000 &0.067$\pm$0.000 &0.153$\pm$0.000 &0.085$\pm$0.000 &\textbf{0.851$\pm$0.000} &0.155$\pm$0.000 &0.005$\pm$0.000 \\
CSMSC~\cite{luo2018consistent} &0.224$\pm$0.006 &0.115$\pm$0.002 &0.235$\pm$0.006 &0.137$\pm$0.002 &0.152$\pm$0.005 &0.144$\pm$0.003 &0.063$\pm$0.002 \\
GMC~\cite{wang2019gmc} &0.165$\pm$0.000 &0.078$\pm$0.000 &0.178$\pm$0.000 &0.088$\pm$0.000 &0.764$\pm$0.000 &0.159$\pm$0.000 &0.012$\pm$0.000 \\
LMVSC~\cite{kang2020large} &0.251$\pm$0.000 &0.129$\pm$0.000 &0.268$\pm$0.000 &0.128$\pm$0.000 &0.167$\pm$0.000 &0.145$\pm$0.000 &0.057$\pm$0.000 \\
SMSC~\cite{hu2020multi} &\underline{0.297$\pm$0.000} &\underline{0.165$\pm$0.000} &\underline{0.324$\pm$0.000} &\underline{0.166$\pm$0.000} &0.199$\pm$0.000 &\underline{0.181$\pm$0.000} &\underline{0.100$\pm$0.000} \\
SFMC~\cite{li2020multiview} &0.134$\pm$0.000 &0.047$\pm$0.000 &0.139$\pm$0.000 &0.084$\pm$0.000 &\underline{0.796$\pm$0.000} &0.152$\pm$0.000 &0.002$\pm$0.000 \\
FMCNOF~\cite{yang2021fast} &0.172$\pm$0.035 &0.065$\pm$0.016 &0.177$\pm$0.034 &0.100$\pm$0.009 &0.457$\pm$0.142 &0.161$\pm$0.004 &0.032$\pm$0.014 \\
Semi-NTF &\textbf{0.659$\pm$0.000} &\textbf{0.674$\pm$0.000} &\textbf{0.675$\pm$0.000} &\textbf{0.526$\pm$0.000} &0.553$\pm$0.000 &\textbf{0.539$\pm$0.000} &\textbf{0.496$\pm$0.000} \\
\bottomrule
\end{tabular}
}
\end{table*}

\begin{table*}[t]
\caption{Clustering performance on AWA and Scene-15. (The best result is in bold, and the second-best result is underlined. )}
\label{clusteringResultOnAWAScene15}
\centering
\scalebox{0.7}
{
\begin{tabular}{ccccccccc}
\toprule
Dataset &\multicolumn{7}{c}{\textbf{AWA}}\\
\midrule
Metrics &ACC &NMI &Purity &PER &REC &F-score &ARI \\
\midrule
%AMGL~\cite{nie2016parameter} &/ &/ &/ &/ &/ &/ &/  \\
MVGL~\cite{zhan2017graph} &0.061$\pm$0.000 &0.070$\pm$0.000 &0.065$\pm$0.000 &0.020$\pm$0.000 &\underline{0.843$\pm$0.000} &0.040$\pm$0.000 &0.002$\pm$0.000 \\
CSMSC~\cite{luo2018consistent} &\underline{0.113$\pm$0.000} &\underline{0.175$\pm$0.002} &\underline{0.119$\pm$0.001} &\underline{0.051$\pm$0.001} &0.054$\pm$0.000 &\underline{0.053$\pm$0.001} &\underline{0.033$\pm$0.001} \\
GMC~\cite{wang2019gmc} &0.028$\pm$0.000 &0.030$\pm$0.000 &0.039$\pm$0.000 &0.020$\pm$0.000 &\textbf{0.915$\pm$0.000} &0.039$\pm$0.000 &0.001$\pm$0.000 \\
LMVSC~\cite{kang2020large} &0.105$\pm$0.000 &0.171$\pm$0.000 &0.114$\pm$0.000 &0.041$\pm$0.000 &0.065$\pm$0.000 &0.051$\pm$0.000 &0.027$\pm$0.000 \\
%SMSC~\cite{hu2020multi} &/ &/ &/ &/ &/ &/ &/  \\
SFMC~\cite{li2020multiview} &0.042$\pm$0.000 &0.044$\pm$0.000 &0.049$\pm$0.000 &0.023$\pm$0.000 &0.592$\pm$0.000 &0.044$\pm$0.000 &0.006$\pm$0.000 \\
FMCNOF~\cite{yang2021fast} &0.035$\pm$0.008 &0.018$\pm$0.011 &0.035$\pm$0.008 &0.021$\pm$0.001 &0.688$\pm$0.227 &0.040$\pm$0.002 &0.002$\pm$0.003 \\
Semi-NTF &\textbf{0.512$\pm$0.000} &\textbf{0.723$\pm$0.000} &\textbf{0.538$\pm$0.000} &\textbf{0.407$\pm$0.000} &0.434$\pm$0.000 &\textbf{0.420$\pm$0.000} &\textbf{0.408$\pm$0.000} \\
\midrule
Dataset&\multicolumn{7}{c}{\textbf{Scene-15}}\\
\midrule
Metrics &ACC &NMI &Purity &PER &REC &F-score &ARI \\
\midrule
AMGL~\cite{nie2016parameter} &0.332$\pm$0.000 &0.303$\pm$0.000 &0.340$\pm$0.000 &0.169$\pm$0.000 &0.375$\pm$0.000 &0.233$\pm$0.000 &0.152$\pm$0.000 \\
MVGL~\cite{zhan2017graph} &0.183$\pm$0.000 &0.154$\pm$0.000 &0.204$\pm$0.000 &0.083$\pm$0.000 &\underline{0.753$\pm$0.000} &0.150$\pm$0.000 &0.030$\pm$0.000 \\
CSMSC~\cite{luo2018consistent} &0.334$\pm$0.008 &0.313$\pm$0.005 &0.378$\pm$0.003 &0.227$\pm$0.003 &0.239$\pm$0.002 &0.233$\pm$0.003 &0.174$\pm$0.003 \\
GMC~\cite{wang2019gmc} &0.140$\pm$0.000 &0.058$\pm$0.000 &0.146$\pm$0.000 &0.071$\pm$0.000 &\textbf{0.893$\pm$0.000} &0.131$\pm$0.000 &0.004$\pm$0.000 \\
LMVSC~\cite{kang2020large} &0.355$\pm$0.000 &0.331$\pm$0.000 &0.399$\pm$0.000 &0.238$\pm$0.000 &0.244$\pm$0.000 &0.241$\pm$0.000 &0.184$\pm$0.000 \\
SMSC~\cite{hu2020multi} &\underline{0.422$\pm$0.000} &\underline{0.392$\pm$0.000} &\underline{0.475$\pm$0.000} &\underline{0.269$\pm$0.000} &0.333$\pm$0.000 &\underline{0.298$\pm$0.000} &\underline{0.240$\pm$0.000} \\
SFMC~\cite{li2020multiview} &0.188$\pm$0.000 &0.135$\pm$0.000 &0.202$\pm$0.000 &0.087$\pm$0.000 &0.344$\pm$0.000 &0.139$\pm$0.000 &0.032$\pm$0.000 \\
FMCNOF~\cite{yang2021fast} &0.218$\pm$0.033 &0.166$\pm$0.022 &0.221$\pm$0.029 &0.117$\pm$0.016 &0.471$\pm$0.062 &0.186$\pm$0.019 &0.085$\pm$0.026 \\
Semi-NTF &\textbf{0.758$\pm$0.000} &\textbf{0.804$\pm$0.000} &\textbf{0.759$\pm$0.000} &\textbf{0.642$\pm$0.000} &0.656$\pm$0.000 &\textbf{0.649$\pm$0.000} &\textbf{0.622$\pm$0.000} \\
\bottomrule
\end{tabular}
}
\end{table*}

\begin{table*}[t]
\caption{Clustering results and running time on Reuters. (The best result is in bold, and the second-best result is underlined. ``OM" means out of memory and ``-" means the algorithm takes more than three hours to calculate.)}
\label{times}
\centering
\scalebox{0.75}
{
\begin{tabular}{ccccc}
\toprule
Dataset &\multicolumn{4}{c}{\textbf{Reuters}}\\
\midrule
Metrics &ACC &NMI &Purity &Running Time (in seconds) \\
\midrule
AMGL~\cite{nie2016parameter} &OM &OM &OM &OM \\
MVGL~\cite{zhan2017graph} &OM &OM &OM &OM   \\
CSMSC~\cite{luo2018consistent} &OM &OM &OM &OM  \\
GMC~\cite{wang2019gmc} &- &- &- &- \\
LMVSC~\cite{kang2020large} &0.587$\pm$0.000 &0.335$\pm$0.000 &\underline{0.616$\pm$0.000} &\underline{16.91}   \\
SMSC~\cite{hu2020multi} &OM &OM &OM &OM  \\
SFMC~\cite{li2020multiview} &\underline{0.602$\pm$0.000} &\underline{0.354$\pm$0.000} &0.552$\pm$0.000 &31.00 \\
FMCNOF~\cite{yang2021fast} &0.343$\pm$0.007 &0.125$\pm$0.037 &0.358$\pm$0.052 &\textbf{10.05} \\
Semi-NTF &\textbf{0.795$\pm$0.000} &\textbf{0.664$\pm$0.000} &\textbf{0.841$\pm$0.000} &419.99 \\
\bottomrule
\end{tabular}
}
\end{table*}

\section{Experiments}\label{Experiment}
In this section, we demonstrate the performance of our proposed method through extensive experiments. It is compared with plenty of state-of-art multi-view clustering algorithms on 5 widely used multi-view datasets. We evaluate the clustering performance by applying 7 metrics used widely, \ie, 1) ACC; 2) NMI; 3) Purity; 4) PRE; 5) REC; 6) F-score; and 7) ARI. The higher the value the better the clustering results for all metrics mentioned above. The experiments are implemented on a standard Windows 10 Server with two Intel (R) Xeon (R) Gold 6230 CPUs and 128 GB RAM, MATLAB R2020a.

\subsection{Datasets and Compared Baselines Methods}
The following 5 multi-view datasets are selected to examine our proposed method.
%\textbf{Mnist4};
%\textbf{AWA};
%\textbf{NUS};
%\textbf{Scene-15};
%\textbf{Reuters};
\begin{itemize}
  \item Mnist4 is from Mnist\cite{lecun1998gradient} dataset. Mnist has a training set of 10000 examples, and the first four categories(0-3) of it, \ie, the first 4000 samples form a subset called Mnist4. It is divided into 4 categories and described by 3 different features:  30 Isometric Projection (IP), 9 Linear Discriminant Analysis (LDA) and 30 Neighbourhood Preserving Embedding (NPE).
  \item AWA is a dataset of animal images, which consists of 30475 images of 50 animals categories. 6 features are used to describe each image, \ie, 2000 Color Histogram (CH), 2000 Local Self-Similarities (LSS), 252 Pyramid HOG (PHOG), 2000 SIFT, 2000 color SIFT (CSIFT), 2000 SURF. We take out 80 images from each category randomly obtaining a dataset containing a total of 4000 images~\cite{fu2015transductive}.
  \item NUS derive from NUS\_WIDE\_Object\cite{chua2009nus} containing 30000 images in 31 classes. The first 12 categories and the first 200 images of each category consist NUS dataset. Its features include  64 Color Histogram (CH), 144 Color Correlation (CC), 73 Edge Direction Histograms (EDH), 128 Wavelet Texture (WT), 225 Block-Wise Color Moment (BWCM), 500 Bag of Words based on SIFT (BWSIFT).
  \item Scene-15~\cite{oliva2001modeling,fei2005bayesian,lazebnik2006beyond} is a dataset of indoor and outdoor environment images, which consists of 4485 images in total and is divided into 15 categories. It is described by 3 features.
  \item Reuters~\cite{li2015large} is a dataset of documents, which consisted of 18758 samples of six classes. Each sample is written by five languages. Different languages are considered as different views. These languages are 21531 English, 24892 French, 34251 German, 15506 Italian, and 11547 Spanish.
\end{itemize}

We choose the following 8 state-of-art multi-view clustering algorithms to compare with our proposed methods:
\textbf{AMGL}~\cite{nie2016parameter};
\textbf{MVGL}~\cite{zhan2017graph};
\textbf{CSMSC}~\cite{luo2018consistent};
\textbf{GMC}~\cite{wang2019gmc};
\textbf{LMVSC}~\cite{kang2020large};
\textbf{SMSC}~\cite{hu2020multi};
\textbf{SFMC}~\cite{li2020multiview}
\textbf{FMCNOF}\cite{yang2021fast};

%\subsection{Compared Baselines Methods}
%We choose the following 8 state-of-art multi-view clustering algorithms to compare with our proposed methods:
%\textbf{AMGL}~\cite{nie2016parameter};
%\textbf{MVGL}~\cite{zhan2017graph};
%\textbf{CSMSC}~\cite{luo2018consistent};
%\textbf{GMC}~\cite{wang2019gmc};
%\textbf{LMVSC}~\cite{kang2020large};
%\textbf{SMSC}~\cite{hu2020multi};
%\textbf{SFMC}~\cite{li2020multiview}
%\textbf{FMCNOF}\cite{yang2021fast};

\subsection{Experiments Result}
The clustering results of several state-of-art multi-view clustering algorithms and our proposed approach on Mnist4, NUS, AWA, and Scene-15 are listed in Table \ref{clusteringResultOnMnist4NUS} and Table \ref{clusteringResultOnAWAScene15}, respectively. In order to demonstrate the advantages of our algorithm on large-scale datasets, we conducted some experiments on partly clustering metrics and running time on Reuters and they are shown as in Table \ref{times}.
%Among them, the best performance of clustering for each metric on different datasets is marked in bold, while the second-best results are underlined.
To reduce the influence of random initialization and other factors on clustering results, we conducted 20 times independent experiments for each comparison algorithm. Finally, we took their average and standard deviations as the final performance.

It is clear that our algorithm outperforms the other baseline algorithms on most of the datasets and maintains a similar level of best results to the other algorithms on a small number of datasets according to Table \ref{clusteringResultOnMnist4NUS}, \ref{clusteringResultOnAWAScene15} and \ref{times}. We did not show AMGL and SMSC in Table \ref{clusteringResultOnAWAScene15} because of the error when running the code provided by their authors in AWA.
%The excellent performance of our algorithm may be due to the full use of the introduction of the 3rd-order tensor term, and thus make full use of the complementary information of multi-view data.

%It is worth mentioning that the clustering results of CSMSC and FMCNOF show that their standard deviation is both nonzero, \ie, the results of multiple experiments are different. The reason for this phenomenon is the use of K-means, but the purpose of using K-means are different for these two algorithms. For CSMSC, K-means is used as post-processing to obtain clustering results. For FMCNOF, K-means are used to select anchors. To eliminate the randomness of the final clustering results, we avoid the use of K-means during the anchors' selection and the post-processing. Instead, we introduce variance-based de-correlation anchor selection (VDA) to select anchors and constrain the factor matrix to be an index matrix to obtain clustering results directly without post-processing.

\subsection{Impact for Parameters}
%In our proposed algorithm, the number of anchors, the value of $p$ from the tensor Schatten $p$-norm, and the value of $\lambda$ are variable parameters. In this section we take 4 datasets: Mnist4, NUS, AWA, and Scene-15 as examples to analyze the effect of these variable parameters.
In this subsection, we analyze the effect of variable parameters on 4 datasets.

\begin{figure}[t]
	\centering
	\includegraphics[width=0.75\linewidth]{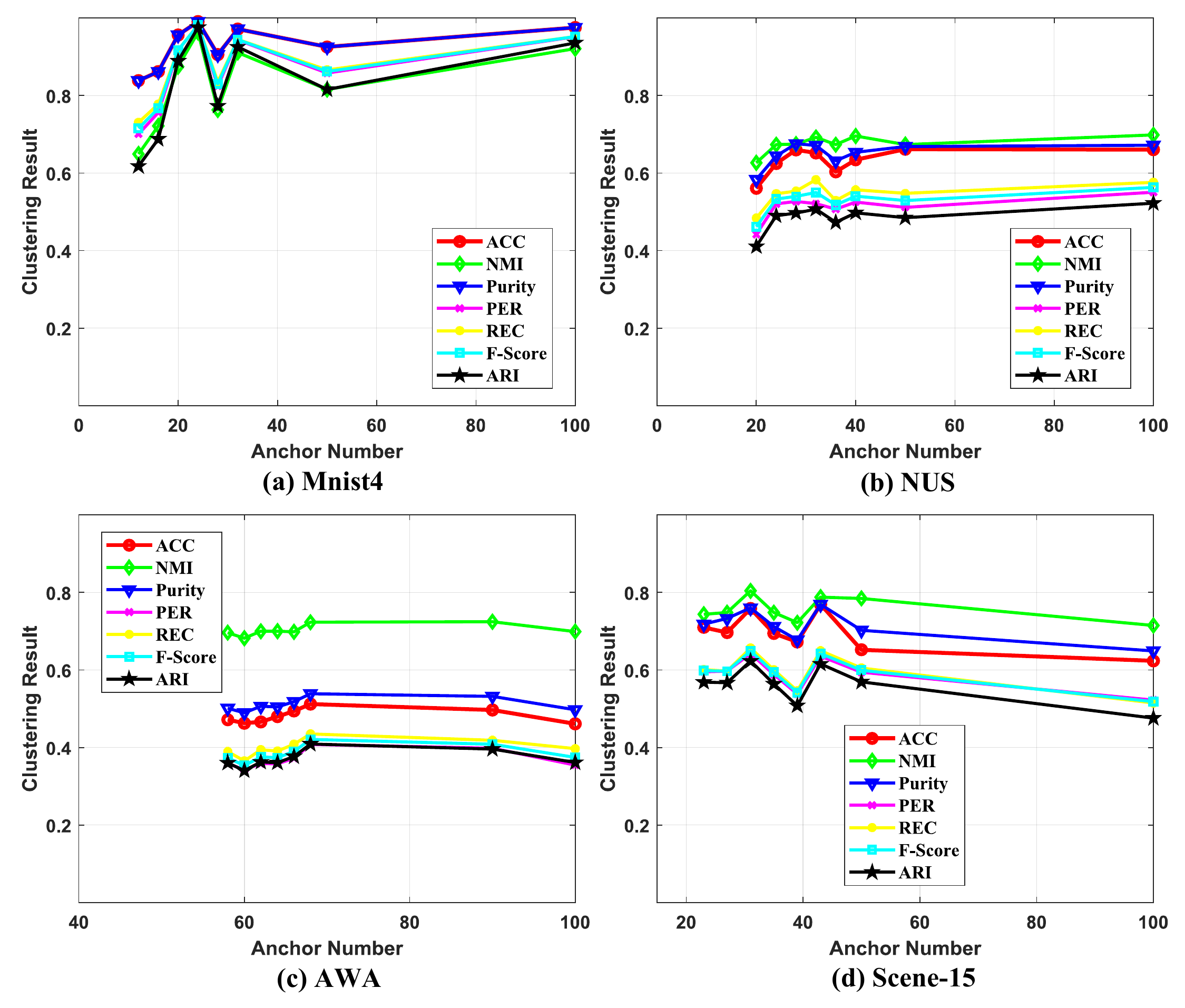}
	\caption{Clustering results with different number of anchors on Mnist4, NUS, AWA and Scene-15 datasets.}
	\label{fig_sim}
    \vspace{-5mm}
\end{figure}

\begin{figure}[t]
	\centering
	\includegraphics[width=0.75\linewidth]{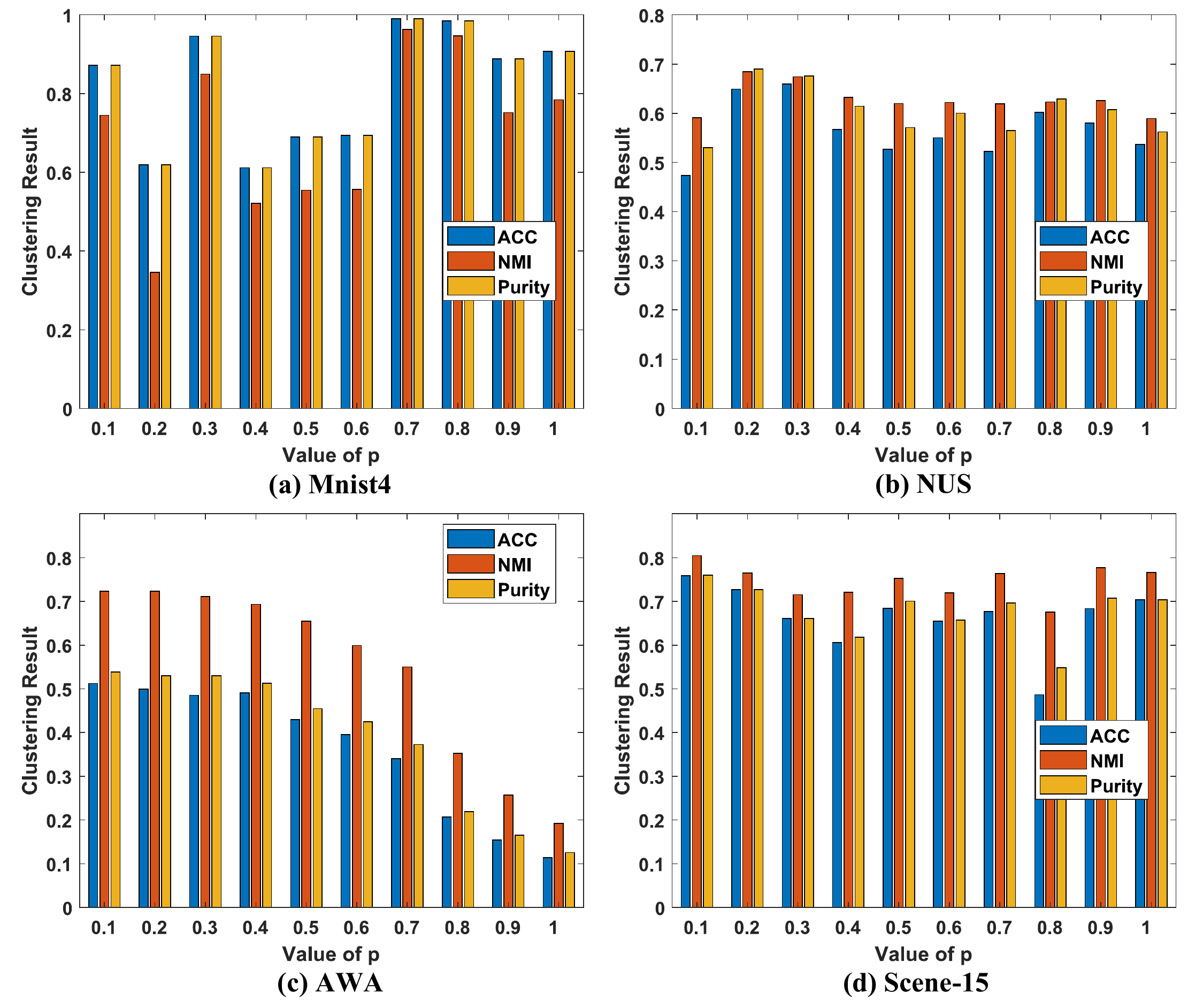}
	\caption{The influence of p on clustering results on Mnist4, NUS, AWA and Scene-15 datasets.}
	\label{fig_sim2}
    \vspace{-5mm}
\end{figure}

\begin{figure}[t]
	\centering
	\includegraphics[width=0.75\linewidth]{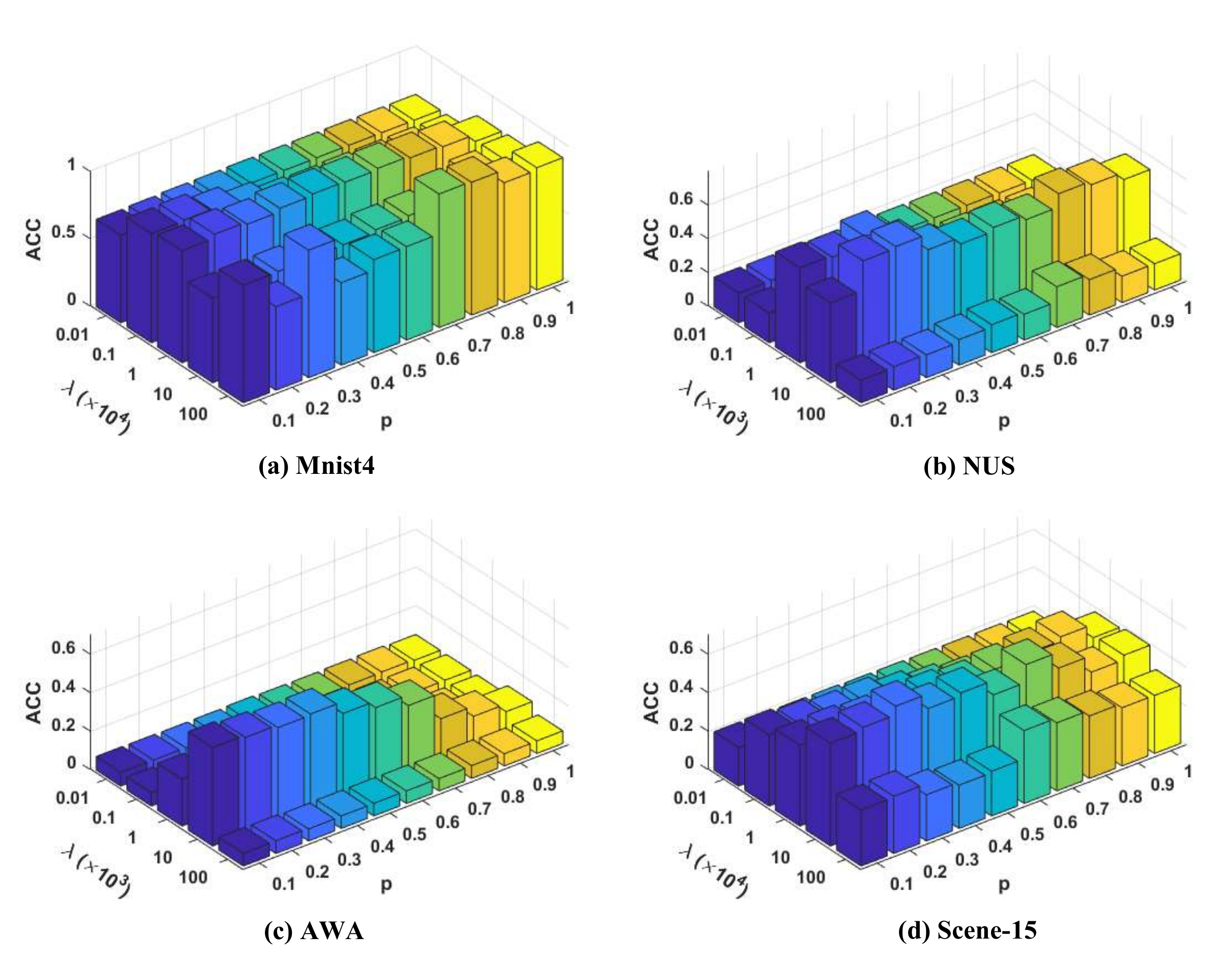}
	\caption{The influence of $\lambda$ on ACC on Mnist4, NUS, AWA and Scene-15 datasets.}
	\label{fig_sim3}
    \vspace{-5mm}
\end{figure}

\begin{figure}[t]
	\centering
	\includegraphics[width=0.7\linewidth]{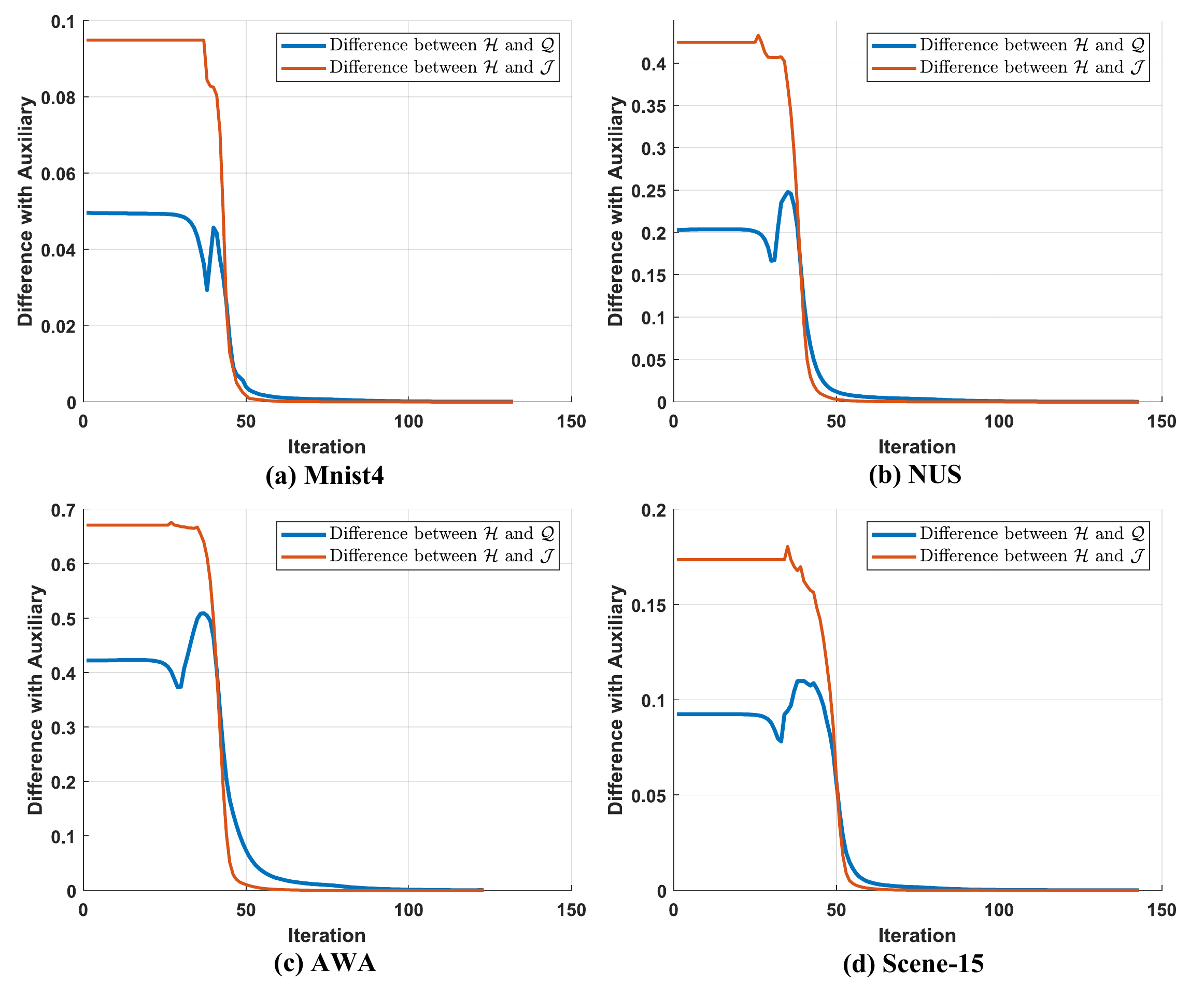}
	\caption{Convergence experiments on Mnist4, NUS, AWA and Scene-15 datasets.}
	\label{fig_sim4}
    \vspace{-5mm}
\end{figure}

\subsubsection{Effect of the number of anchors}
For different datasets, we choose different sets of anchor numbers. For Mnist4, NUS and Scene-15, we choose the number of anchors from $\{k+8,\ k+12,\ k+16,\ k+20,\ k+24,\ k+28,\ 50,\ 100\}$. Since the classification of AWA is 50 and it is much larger than the three datasets mentioned above, we choose the number of anchors from $\{k+8,\ k+10,\ k+12,\ k+14,\ k+16,\ k+18,\ 90,\ 100\}$ for AWA, where $k$ is the number of categories.

As can be seen from Fig \ref{fig_sim}, the clustering results obtained under a different number of anchors are distinctive. When the number of anchors is $k+20,\ k+16,\ k+18,$ and $k+16,$ Mnist4, NUS, AWA, and Scene-15 reach the best results. Meanwhile, as shown in Fig \ref{fig_sim}, we can conclude that it is unnecessary to select more anchors to get better clustering results.

\subsubsection{Effect of the value of $p$}
We set the value of $p$ to be 0.1 to 1.0 with a step of 0.1. We obtained the results of ACC, NMI, and Purity in experiments with different values of p. The clustering results are shown in Fig \ref{fig_sim2}. We can observe that the clustering performance is significantly distinctive under different values of $p$ obviously. The best clustering results are obtained on Mnist4, NUS, AWA, and Scene-15 when the values of $p$ are $0.7,\ 0.2,\ 0.1,$ and $0.1,$ respectively. This may be due to the fact that the tensor Schatten $p$-norm can take full advantage of the complementary information of the multi-view data.

\subsubsection{Effect of the value of $\lambda$}
Since different datasets show different clustering performances under the same values of $\lambda$, we get the values of $\lambda$ from $\{100,\ 1000,\ 10000,\ 100000,\ 1000000\}$ for Mnist4 and Scene-15 while $\{10,\ 100,\ 1000,\ 10000,\ 100000\}$ for NUS and AWA. We conduct experiments under the changing values of $\lambda$ and $p$ to judge the impact of the values of $\lambda$ on ACC as well as the values of $\lambda$ and $p$ on ACC.

The result of the experiment is shown in Fig \ref{fig_sim3}. We can find that different values of $\lambda$ under the same $p$ will cause more distinctive clustering results. On the contrary, ACC does not change significantly with the values of $p$ when $\lambda$ is at some specific value. This indicates that only choose the appropriate $\lambda$ can the tensor Schatten $p$-norm make a difference to the greatest extent.

\subsection{Experiments of convergence}
We optimize the objective function iteratively by introducing two auxiliary variables $\bm{\mathcal Q}$ and $\bm{\mathcal J}$. We test the convergence of our algorithm by checking the difference between $\bm{\mathcal H}$ and the two auxiliary variables.
The result of the experiment is shown in Fig \ref{fig_sim4}. It is evident that when the iteration reaches around 50, the difference decreases significantly until it is about zero finally.

\section{Conclusion}\label{conclusion}
We are concerned in this paper with multi-view clustering based on semi-non-negative tensor factorization (Semi-NTF) with one-side orthogonal constraint. Our proposed model extends NMF to Semi-NTF so that the spatial structure information of the multi-view data can be utilized to improve the clustering performance. In addition, the complementary information embedded in different views is fully leveraged by imposing the tensor Schatten $p$-norm composed of cluster indicator matrices. To diminish the computational complexity, we adopt anchor graphs instead of the original multi-view data.
Also, we provide an optimization algorithm for the proposed method and validate the effectiveness of this approach in extensive experiments on different datasets.

\appendices

\subsection{Proof of the 1st part}
\begin{lemma}[Proposition 6.2 of \cite{lewis2005nonsmooth}]\label{lewis}
Suppose $F: \mathbb{R}^{n_1\times n_2}\rightarrow \mathbb{R}$ is represented as $F(X)=f \circ \sigma(X)$, where $X\in\mathbb{R}^{n_1\times n_2} $ with SVD
 $X=U \mathrm{diag}(\sigma_1, \ldots, \sigma_n) V^{\mathrm{T}}$, $n=\min(n_1, n_2)$, and $f$ is differentiable. The gradient of $F(X)$ at $X$ is
\begin{equation}
\label{deritheorem}
\frac{\partial F(X)}{\partial X}=U \mathrm{diag}(\theta) V^{\mathrm{T}},
\end{equation}
where $\theta=\frac{\partial f(y)}{\partial y}|_{y=\sigma (X)}$.
\end{lemma}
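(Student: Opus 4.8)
The plan is to exploit the orthogonal invariance of $F$, reduce the computation of the gradient to the case of a diagonal argument, and then read off the diagonal and off-diagonal entries separately using first-order singular-value perturbation theory. Throughout I assume, as is implicit in the statement, that $f$ is \emph{absolutely symmetric}, i.e.\ invariant under permutations and sign changes of its coordinates; this is exactly what makes $F(X)=f(\sigma(X))$ well defined and what makes the claimed formula independent of the (non-unique) choice of singular vectors $U,V$.

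First I would record the invariance $F(PXQ^{\mathrm{T}})=F(X)$ for all orthogonal $P,Q$, which holds because $\sigma(PXQ^{\mathrm{T}})=\sigma(X)$. Assuming $F$ is differentiable at $X$, the chain rule applied to this identity gives the equivariance $\nabla F(PXQ^{\mathrm{T}})=P\,\nabla F(X)\,Q^{\mathrm{T}}$. Writing $X=U\mathrm{diag}(\sigma)V^{\mathrm{T}}$ and choosing $P=U^{\mathrm{T}}$, $Q=V^{\mathrm{T}}$, this reduces the problem to computing $\nabla F$ at the diagonal matrix $\Sigma=\mathrm{diag}(\sigma_1,\ldots,\sigma_n)$, since then $\nabla F(X)=U\,\nabla F(\Sigma)\,V^{\mathrm{T}}$.

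Next I would compute $\nabla F(\Sigma)$ entrywise. For a diagonal perturbation $\Sigma+tE_{ii}$ (where $E_{ii}$ has a single $1$ in position $(i,i)$) the singular values are $\sigma_j$ for $j\neq i$ together with $|\sigma_i+t|$, so for small $t$ one gets $F(\Sigma+tE_{ii})=f(\sigma_1,\ldots,\sigma_i+t,\ldots,\sigma_n)$; differentiating at $t=0$ yields $\langle\nabla F(\Sigma),E_{ii}\rangle=\partial f/\partial y_i\big|_{\sigma}=\theta_i$, which identifies the diagonal entries. For an off-diagonal perturbation $\Sigma+tE_{ij}$ with $i\neq j$, the relevant $2\times2$ block $\bigl(\begin{smallmatrix}\sigma_i & t\\ 0 & \sigma_j\end{smallmatrix}\bigr)$ has singular values whose product is exactly $\sigma_i\sigma_j$ and whose squared sum is $\sigma_i^2+\sigma_j^2+t^2$; when $\sigma_i\neq\sigma_j$ this forces each to agree with $\{\sigma_i,\sigma_j\}$ up to $O(t^2)$, so $F(\Sigma+tE_{ij})=F(\Sigma)+O(t^2)$ and $\langle\nabla F(\Sigma),E_{ij}\rangle=0$. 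The degenerate case $\sigma_i=\sigma_j$ is where absolute symmetry is essential: the two perturbed singular values split to first order as $\sigma_i\pm|t|/2$, but symmetry forces $\partial f/\partial y_i=\partial f/\partial y_j$ at such a point, so the first-order contributions cancel and the directional derivative again vanishes. Together these show $\nabla F(\Sigma)=\mathrm{diag}(\theta)$, and substituting into the equivariance relation gives $\nabla F(X)=U\,\mathrm{diag}(\theta)\,V^{\mathrm{T}}$.

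The main obstacle is not the entrywise computation but justifying that $F$ is genuinely Fr\'echet differentiable at $X$, rather than merely possessing all directional derivatives, and controlling the degenerate cases of repeated or zero singular values, where the singular-value map $\sigma(\cdot)$ itself fails to be differentiable. Here I would combine the absolute symmetry of $f$ with a uniform remainder estimate: one shows that the candidate linear map $H\mapsto\langle U\,\mathrm{diag}(\theta)\,V^{\mathrm{T}},H\rangle$ approximates $F(X+H)-F(X)$ to first order uniformly in the direction of $H$, using the Lipschitz continuity of $\sigma$ and the differentiability of $f$ at $\sigma(X)$ to absorb the nonsmooth behaviour of the singular values into an $o(\|H\|)$ term. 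This is the delicate step carried out in \cite{lewis2005nonsmooth}, and I would follow that argument to close the proof.
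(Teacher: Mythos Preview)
The paper does not supply its own proof of this lemma: it merely quotes the result as Proposition~6.2 of \cite{lewis2005nonsmooth} and then applies it. So there is no ``paper's proof'' to compare against; your proposal is effectively a sketch of the argument in the cited reference.

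That said, your sketch is correct and follows the standard route used in Lewis's work: reduce via orthogonal invariance/equivariance to a diagonal argument, compute diagonal and off-diagonal directional derivatives at $\Sigma$, and invoke absolute symmetry of $f$ to kill the degenerate off-diagonal contributions. Your treatment of the $\sigma_i=\sigma_j$ case is the right one (the $\pm|t|/2$ splitting together with $\partial_i f=\partial_j f$ at coincident coordinates), and you are also right that the genuinely nontrivial part is upgrading these directional computations to Fr\'echet differentiability of $F$ at points with repeated or zero singular values, which is exactly what \cite{lewis2005nonsmooth} handles carefully. Nothing in your outline is wrong or missing for the purposes of this paper.
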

To minimize ${\bm{\mathcal{\bar H}}^{(v)}}$ at step $k+1$ in \eqref{solveH2}, the optimal ${\bm{\mathcal{\bar H}}_{k+1}^{(v)}}$ needs to satisfy the first-order optimal condition
$$2{\bm{\mathcal{\bar H}}_{k+1}^{(v)}}={\bm{\mathcal{\bar Q}}_{k}^{(v)}}+\frac{\bm{\mathcal{\bar{Y}}}_{1,k}}{\mu_k}+{\bm{\mathcal{\bar J}}_{k}^{(v)}}+\frac{\bm{\mathcal{\bar{Y}}}_{2,k}}{\rho_k}.$$ By using the updating rule $\bm{\mathcal{\bar Y}}_{1,k+1} ^{(v)}= {\bm{\mathcal{\bar{Y}}}_{1,k}^{(v)}} + \mu_k({\bm{\mathcal{\bar Q}}_{k}^{(v)}}-{\bm{\mathcal{\bar H}}_{k}^{(v)}}),\ \bm{\mathcal{\bar Y}}_{2,k+1} ^{(v)}= {\bm{\mathcal{\bar{Y}}}_{2,k}^{(v)}} + \rho_k({\bm{\mathcal{\bar J}}_{k}^{(v)}}-{\bm{\mathcal{\bar H}}_{k}^{(v)}})$, we have $$\frac{\bm{\mathcal{\bar Y}}_{1,k+1}^{(v)}}{\mu_k}+\frac{\bm{\mathcal{\bar Y}}_{2,k+1} ^{(v)}}{\rho_k}+2({\bm{\mathcal{\bar H}}_{k}^{(v)}}-{\bm{\mathcal{\bar H}}_{k+1}^{(v)}})=0.$$ According to our assumption $\lim_{k\rightarrow 0}\max\{\mu_k,\rho_k\}({\bm{\mathcal{\bar H}}_{k+1}^{(v)}}-{\bm{\mathcal{\bar H}}_k^{(v)}})=0$, we know $\bm{{\mathcal{Y}}}_{1,k+1},\bm{{\mathcal{Y}}}_{2,k+1}$ is bounded.

To minimize $\mathcal{J}$ at step $k+1$ in \eqref{solve J}, the optimal $\mathcal{J}_{k+1}$ needs to satisfy the first-order optimal condition
$$\lambda\nabla_{\bm{\mathcal {J}}}\|\bm{\mathcal {J}}_{k+1}\|^p_{\Sp}+\rho_k(\bm{\mathcal {J}}_{k+1}-\bm{{\mathcal{H}}}_{k+1}-\dfrac{1}{\rho_k}\bm{{\mathcal{Y}}}_{2,k})=0.$$

Recall that when $0<p<1$, in order to overcome the singularity of $(|\eta|^p)'=p\eta/|\eta|^{2-p}$ near $\eta=0$, we consider for $0<\epsilon\ll 1$ the approximation
$$
	\partial |\eta|^p\approx\dfrac{p\eta}{\max\{\epsilon^{2-p},|\eta|^{2-p}\}}.
$$
Letting $\overline {\bm{\mathcal {J}}}^{(i)}={\overline {\bm{\mathcal {U}}}}^{(i)}\mathrm{diag}\left(\sigma_j(\overline {\bm{\mathcal {J}}}^{(i)})\right){\overline {\bm{\mathcal {V}}}}^{(i)\mathrm{H}},$ then it follows from  Lemma~\ref{lewis} that \begin{align*}\frac{\partial \|{\overline {\bm{\mathcal {J}}}}^{(i)}\|^p_{\Sp}}{\partial{\overline {\bm{\mathcal {J}}}}^{(i)}}={\overline {\bm{\mathcal {U}}}}^{(i)}\mathrm{diag}\left(\dfrac{p\sigma_j(\overline {\bm{\mathcal {J}}}^{(i)})}{\max\{\epsilon^{2-p},|\sigma_j(\overline {\bm{\mathcal {J}}}^{(i)})|^{2-p}\}}\right){\overline {\bm{\mathcal {V}}}}^{(i)\mathrm{H}}.\end{align*}
And  then one can obtain
\begin{align*}
&\dfrac{p\sigma_j(\overline {\bm{\mathcal {J}}}^{(i)})}{\max\{\epsilon^{2-p},|\sigma_j(\overline {\bm{\mathcal {J}}}^{(i)})|^{2-p}\}}\leq \dfrac{p}{\epsilon^{1-p}}\\&\Longrightarrow \left\|\frac{\partial \|{\overline {\bm{\mathcal {J}}}}^{(i)}\|^p_{\Sp}}{\partial{\overline {\bm{\mathcal {J}}}}^{(i)}}\right\|^2_F\leq \sum^{n}_{i=1} \dfrac{p^2}{\epsilon^{2(1-p)}}.
\end{align*}So
$\frac{\partial \|{\overline {\bm{\mathcal {J}}}}\|^p_{\Sp}}{\partial{\overline {\bm{\mathcal {J}}}}}$
is bounded.

Let us denote $\widetilde{\mathbf{F}}_{V} = \frac{1}{\sqrt{V}}\mathbf{F}_{V}$, $\mathbf{F}_{V}$ is the discrete Fourier transform matrix of size $V\times V$,\  $\mathbf{F}^{\mathrm{H}}_{V}$ denotes its conjugate transpose. For $\bm{\mathcal {J}}=\overline {\bm{\mathcal {J}}}\times_3 \widetilde{\mathbf{F}}_{V}$ and using the chain rule in matrix calculus, one can obtain that $$\nabla_{\bm{\mathcal {J}}}\|\bm{\mathcal {J}}\|^p_{\Sp}=\frac{\partial \|{\bm{\mathcal {J}}}\|^p_{\Sp}}{\partial{\overline {\bm{\mathcal {J}}}}}\times_3 \widetilde{\mathbf{F}}_{V}^{\mathrm{H}}$$ is bounded.

And it follows that
\begin{align*}
&\bm{{\mathcal{Y}}}_{1,k+1}=\bm{{\mathcal{Y}}}_{2,k}+\rho_{k}(\bm{{\mathcal{H}}}_{k+1}-\bm{\mathcal {J}}_{k+1})\\&\Longrightarrow \lambda\nabla_{\bm{\mathcal {J}}}\|\bm{\mathcal {J}}_{k+1}\|^p_{\Sp}=\bm{{\mathcal{Y}}}_{2,k+1},
\end{align*}
thus $\bm{{\mathcal{Y}}}_{2,k+1}$ appears to be bounded.

Moreover, by using the updating rule $\bm{\mathcal Y}_1=\bm{\mathcal Y}_1+\mu(\bm{\mathcal H}-\bm{\mathcal Q})$, $\bm{\mathcal Y}_2=\bm{\mathcal Y}_2+\rho(\bm{\mathcal H}-\bm{\mathcal J})$,  we can deduce ($i=1,2$)
\begin{align}
\label{eq:Lk_ieq}
&\bm{\mathcal L} (\bm{\mathcal Q}_{k+1}, \bm{\mathcal G}_{k+1}, \bm{\mathcal H}_{k+1},\bm{\mathcal J}_{k+1};\bm{{\mathcal{Y}}}_{i,k})\\& \leq  \bm{\mathcal L} (\bm{\mathcal Q}_{k}, \bm{\mathcal G}_{k}, \bm{\mathcal H}_{k},\bm{\mathcal J}_{k};\bm{{\mathcal{Y}}}_{i,k}) \nonumber\\&= \bm{\mathcal L} (\bm{\mathcal Q}_{k}, \bm{\mathcal G}_{k}, \bm{\mathcal H}_{k},\bm{\mathcal J}_{k};\bm{{\mathcal{Y}}}_{i,k-1})\nonumber\\
&+\frac{\rho_k+\rho_{k-1}}{2\rho^2_{k-1}}\|\bm{{\mathcal{Y}}}_{2,k}-\bm{{\mathcal{Y}}}_{2,k-1}\|_F^2+ \frac{\|\bm{{\mathcal{Y}}}_{2,k}\|_F^2}{2\rho_k}- \frac{\|\bm{{\mathcal{Y}}}_{2,k-1}\|_F^2}{2\rho_{k-1}}\nonumber\\&+\frac{\mu_k+\mu_{k-1}}{2\mu^2_{k-1}}\|\bm{{\mathcal{Y}}}_{1,k}-\bm{{\mathcal{Y}}}_{1,k-1}\|_F^2+ \frac{\|\bm{{\mathcal{Y}}}_{1,k}\|_F^2}{2\mu_k}- \frac{\|\bm{{\mathcal{Y}}}_{1,k-1}\|_F^2}{2\mu_{k-1}}.\nonumber
\end{align}
Thus, summing two sides of \eqref{eq:Lk_ieq} from $k=1$ to $n$, we have
\begin{equation}
\begin{aligned}
&\bm{\mathcal L} (\bm{\mathcal Q}_{n+1}, \bm{\mathcal G}_{n+1}, \bm{\mathcal H}_{n+1},\bm{\mathcal J}_{n+1};\bm{{\mathcal{Y}}}_{i,n}) \\ \leq & \bm{\mathcal L} (\bm{\mathcal Q}_{1}, \bm{\mathcal G}_{1}, \bm{\mathcal H}_{1},\bm{\mathcal J}_{1};\bm{{\mathcal{Y}}}_{i,0})) \\
+&\frac{\|\bm{{\mathcal{Y}}}_{2,n}\|_F^2}{2\rho_n}- \frac{\|\bm{{\mathcal{Y}}}_{2,0}\|_F^2}{2\rho_{0}}+\sum_{k=1}^n\left(\frac{\rho_k+\rho_{k-1}}{2\rho^2_{k-1}}\|\bm{{\mathcal{Y}}}_{2,k}-\bm{{\mathcal{Y}}}_{2,k-1}\|_F^2\right) \\
+&\frac{\|\bm{{\mathcal{Y}}}_{1,n}\|_F^2}{2\mu_n}- \frac{\|\bm{{\mathcal{Y}}}_{1,0}\|_F^2}{2\mu_{0}}+\sum_{k=1}^n\left(\frac{\mu_k+\mu_{k-1}}{2\mu^2_{k-1}}\|\bm{{\mathcal{Y}}}_{1,k}-\bm{{\mathcal{Y}}}_{1,k-1}\|_F^2\right).
\label{eq:Lk_sum}
\end{aligned}
\end{equation}
Observe that
\[
\sum_{k=1}^{\infty}\frac{\rho_k+\rho_{k-1}}{2\rho_{k-1}^2}<\infty,\sum_{k=1}^{\infty}\frac{\mu_k+\mu_{k-1}}{2\mu_{k-1}^2}<\infty,
\]
we have the right-hand side of \eqref{eq:Lk_sum} is finite and thus $\bm{\mathcal L} (\bm{\mathcal Q}_{n+1}, \bm{\mathcal G}_{n+1}, \bm{\mathcal H}_{n+1},\bm{\mathcal J}_{n+1};\bm{{\mathcal{Y}}}_{i,n})$ is bounded. Notice from \eqref{objective function}
\begin{align}
\label{eq:Ln_bdd}
&\bm{\mathcal L} (\bm{\mathcal Q}_{n+1}, \bm{\mathcal G}_{n+1}, \bm{\mathcal H}_{n+1},\bm{\mathcal J}_{n+1};\bm{{\mathcal{Y}}}_{i,n}) \nonumber\\
&= \sum\limits_{v = 1}^{V} {{\left\| \bm{\mathcal{\bar S}}^{(v)} - \bm{\mathcal{\bar Q}}_{n+1}^{(v)}(\bm{\mathcal{\bar G}}_{n+1}^{(v)})^T \right \|}_F^2} \nonumber\\
&+\lambda\|\bm{\mathcal {J}}_{n+1}\|^p_{\Sp} + \frac{\rho_{n}}{2}\|\bm{\mathcal {H}}_{n+1}-\bm{\mathcal {J}}_{n+1}+\frac{\bm{{\mathcal{Y}}}_{2,n}}{\rho_{n}}\|_F^2\nonumber\\
&+ \frac{\mu_{n}}{2}\sum\limits_{v = 1}^{V}\|\bm{\mathcal{\bar Q}}_{n+1}^{(v)}-\bm{\mathcal{\bar H}}_{n+1}^{(v)}+\frac{\bm{\mathcal{\bar Y}}_{1,n+1} ^{(v)}}{\mu_{n}}\|_F^2,
\end{align}
and each term  of \eqref{eq:Ln_bdd} is nonnegative, following from the boundedness of $\bm{\mathcal L} (\bm{\mathcal Q}_{n+1}, \bm{\mathcal G}_{n+1}, \bm{\mathcal H}_{n+1},\bm{\mathcal J}_{n+1};\bm{{\mathcal{Y}}}_{i,n})$, we can deduce each term of \eqref{eq:Ln_bdd} is bounded. And
$\|\bm{{\mathcal{J}}}_{n+1}\|^p_{\Sp}$ being bounded implies that all singular values of $\bm{{\mathcal{J}}}_{n+1}$ are bounded and hence $\|\bm{{\mathcal{J}}}_{n+1}\|^2_F$ (the sum of squares of singular values) is  bounded. Therefore, the sequence $\{\bm{{\mathcal{J}}}_k\}$ is  bounded.

Because $$\bm{{\mathcal{Y}}}_{1,k+1}=\bm{{\mathcal{Y}}}_{1,k}+\mu_{k}(\bm{{\mathcal{H}}}_{k}-\bm{{\mathcal{Q}}}_{k})\Longrightarrow \bm{{\mathcal{Q}}}_{k}=\bm{{\mathcal{H}}}_{k}+\frac{\bm{{\mathcal{Y}}}_{1,k+1}-\bm{{\mathcal{Y}}}_{1,k}}{\mu_{k}},$$ and in light of the boundedness of $\bm{{\mathcal{H}}}_{k}, \bm{{\mathcal{Y}}}_{1,k}$, it is clear that $\bm{{\mathcal{Q}}}_{k}$ is also bounded.

And from \eqref{solveG1}, it is evident that $\|\bm{\mathcal{\bar G}}_k^{(v)}\|^2_F \leq \|(\bm{\mathcal{\bar S}}^{(v)})^\textrm{T}\|^2_F\|\bm{\mathcal{\bar Q}}_k^{(v)}\|^2_F$, so $\bm{\mathcal{\bar G}}_k^{(v)}$ is also bounded. So $\bm{{\mathcal{G}}}_{k}$ is bounded.

\subsection{Proof of the 2nd part}

From Weierstrass-Bolzano theorem, there exists at least one accumulation point of the sequence $\mathcal{P}_{k}$. We denote one of the points $\mathcal{P}^*=\{\bm{{\mathcal{H}}}^*, \bm{{\mathcal{Q}}}^*, \bm{{\mathcal{G}}}^*, \bm{{\mathcal{J}}}^*, \bm{{\mathcal{Y}}_1}^{*}, \bm{{\mathcal{Y}}}^*_{2}\}$. Without loss of generality, we assume $\{\mathcal{P}_{k}\}^{+\infty}_{k=1}$ converge to $P^*.$

Note that from the updating rule for $\bm{{\mathcal{Y}}}_1$, we have $$\bm{{\mathcal{Y}}}_{2,k+1}=\bm{{\mathcal{Y}}}_{2,k}+\rho_{k}(\bm{{\mathcal{Q}}}_{k}-\bm{{\mathcal{J}}}_{k})\Longrightarrow \bm{{\mathcal{J}}}^*=\bm{{\mathcal{Q}}}^*.$$

Note that from the updating rule for $\bm{{\mathcal{Y}}}_2$, we have $${\bm{\mathcal{\bar{Y}}}_{1,k+1} ^{(v)}}= {\bm{\mathcal{\bar{Y}}}_{1,k}^{(v)}} + \mu_k({\bm{\mathcal{\bar Q}}_{k}^{(v)}}-{\bm{\mathcal{\bar H}}_{k}^{(v)}})\Longrightarrow {\bm{\mathcal{\bar Q}}^{(v)*}}={\bm{\mathcal{\bar H}}^{(v)*}}.$$
In the $\bm{\mathcal{\bar G}}^{(v)}$-subproblem \eqref{solveG1}, we have $$\bm{\mathcal{\bar G}}_k^{(v)} = (\bm{\mathcal{\bar S}}^{(v)})^\textrm{T}\bm{\mathcal{\bar Q}}_k^{(v)}\Longrightarrow{\bm{\mathcal{\bar G}}}^{(v)*} = (\bm{\mathcal{\bar S}}^{(v)})^\textrm{T}\bm{\mathcal{\bar Q}}^{(v)*}.$$

In the $\bm{\mathcal {J}}$-subproblem \eqref{solve J}, we have $$\lambda\nabla_{\bm{\mathcal {J}}}\|\bm{\mathcal {J}}_{k+1}\|^p_{\Sp}=\bm{{\mathcal{Y}}}_{2,k}\Longrightarrow\bm{{\mathcal{Y}}}_1^{*}=\lambda\nabla_{\bm{\mathcal {J}}}\|\bm{\mathcal {J}}^*\|^p_{\Sp}.$$

Therefore, one can see that the sequences $\bm{{\mathcal{H}}}^*, \bm{{\mathcal{Q}}}^*, \bm{{\mathcal{G}}}^*, \bm{{\mathcal{J}}}^*, \bm{{\mathcal{Y}}_1}^{*}, \bm{{\mathcal{Y}}}^*_{2}$ satisfy the KKT conditions of the Lagrange function \eqref{objective function}.

\subsection{Anchor Selection And Graph Construction}\label{anchorSelection}
Inspired by~\cite{li2020multiview}, we adopt directly alternate sampling (DAS) to select anchors.

First of all, with the given data matrices $\{{\bf{X}}^{(v)}\}_{v=1}^V$, we concatenate the data matrix of each view along the feature dimension. The connected feature matrix $\mathbf X \in{\mathbb{R}}^{{n} \times {d}}$ can be represented as $\mathbf X = [\mathbf X^{(1)};\mathbf X^{(2)};\cdots;\mathbf X^{(v)}]$, where $d$ is the sum of the number of features in each view.
Let $\theta_i$ represent the $i$-th sample of the $d$-dimensional features, which can be calculated as
\begin{equation}
\theta_i = \sum_{j=1}^{dT} Tra(X_{ij}),
\end{equation}
where $dT=\sum_{v=1}^{V} d_v$, and $Tra(\cdot)$ represents the transformation of the raw features. Specifically, if the features are negative, we process the features of each dimension by subtracting the minimum value in each dimension. Then we obtain the score vector $\bm{\theta} = [\theta_1,\theta_2,\cdots, \theta_n] \in {\mathbb{R}}^{n}$. We choose the point where the maximum score is located as the anchor. The position of the largest score is
\begin{equation}\label{selectAnchor1}
	Index = \arg \max_i \theta_i.
\end{equation}
Then the 1st anchor of the $v$-th view is $b_1^{(v)} = x_{Index}^{(v)}$.

After that, let $\theta_{Index}$ be the score of the anchor selected from the last round, then we normalize the score of each sample by:
\begin{equation}\label{selectAnchor2}
	\theta_i \leftarrow \frac{\theta_i}{\max \bm{\theta}}, (i=1,2,\cdots,n)
\end{equation}
Then the score $\theta_i$ can be updated as
\begin{equation}\label{selectAnchor3}
	\theta_i \leftarrow \theta_i \times (1-\theta_i).
\end{equation}

Finally, we repeat  (\ref{selectAnchor1}) -  (\ref{selectAnchor3}) $m$ times to select $m$ anchors. After selecting $m$ anchors, we construct an anchor graph of each view $\mathbf S^{(v)}$, in the same way, as~\cite{li2020multiview}.

\ifCLASSOPTIONcompsoc
\else
\fi
% Can use something like this to put references on a page
% by themselves when using endfloat and the captionsoff option.
\ifCLASSOPTIONcaptionsoff
  \newpage
\fi

{\small
\bibliographystyle{IEEEtran}
\bibliography{egbib}

% Generated by IEEEtran.bst, version: 1.14 (2015/08/26)
\begin{thebibliography}{10}
\providecommand{\url}[1]{#1}
\csname url@samestyle\endcsname
\providecommand{\newblock}{\relax}
\providecommand{\bibinfo}[2]{#2}
\providecommand{\BIBentrySTDinterwordspacing}{\spaceskip=0pt\relax}
\providecommand{\BIBentryALTinterwordstretchfactor}{4}
\providecommand{\BIBentryALTinterwordspacing}{\spaceskip=\fontdimen2\font plus
\BIBentryALTinterwordstretchfactor\fontdimen3\font minus
  \fontdimen4\font\relax}
\providecommand{\BIBforeignlanguage}[2]{{%
\expandafter\ifx\csname l@#1\endcsname\relax
\typeout{** WARNING: IEEEtran.bst: No hyphenation pattern has been}%
\typeout{** loaded for the language `#1'. Using the pattern for}%
\typeout{** the default language instead.}%
\else
\language=\csname l@#1\endcsname
\fi
#2}}
\providecommand{\BIBdecl}{\relax}
\BIBdecl

\bibitem{oyelade2010application}
O.~J. Oyelade, O.~O. Oladipupo, and I.~C. Obagbuwa, ``Application of k means
  clustering algorithm for prediction of students academic performance,''
  \emph{arXiv preprint arXiv:1002.2425}, 2010.

\bibitem{chang2014multi}
W.-Y. Chang, C.-P. Wei, and Y.-C.~F. Wang, ``Multi-view nonnegative matrix
  factorization for clothing image characterization,'' in \emph{2014 22nd
  International Conference on Pattern Recognition}.\hskip 1em plus 0.5em minus
  0.4em\relax IEEE, 2014, pp. 1272--1277.

\bibitem{alashwal2019application}
H.~Alashwal, M.~El~Halaby, J.~J. Crouse, A.~Abdalla, and A.~A. Moustafa, ``The
  application of unsupervised clustering methods to alzheimer’s disease,''
  \emph{Frontiers in computational neuroscience}, vol.~13, p.~31, 2019.

\bibitem{lee1999learning}
D.~D. Lee and H.~S. Seung, ``Learning the parts of objects by non-negative
  matrix factorization,'' \emph{Nature}, vol. 401, no. 6755, pp. 788--791,
  1999.

\bibitem{ding2005equivalence}
C.~Ding, X.~He, and H.~D. Simon, ``On the equivalence of nonnegative matrix
  factorization and spectral clustering,'' in \emph{Proceedings of the 2005
  SIAM international conference on data mining}.\hskip 1em plus 0.5em minus
  0.4em\relax SIAM, 2005, pp. 606--610.

\bibitem{ding2006orthogonal}
C.~Ding, T.~Li, W.~Peng, and H.~Park, ``Orthogonal nonnegative matrix
  t-factorizations for clustering,'' in \emph{Proceedings of the 12th ACM
  SIGKDD international conference on Knowledge discovery and data mining},
  2006, pp. 126--135.

\bibitem{cichocki2006multilayer}
A.~Cichocki, R.~Zdunek \emph{et~al.}, ``Multilayer nonnegative matrix
  factorisation,'' \emph{ELECTRONICS LETTERS-IEE}, vol.~42, no.~16, p. 947,
  2006.

\bibitem{pauca2006nonnegative}
V.~P. Pauca, J.~Piper, and R.~J. Plemmons, ``Nonnegative matrix factorization
  for spectral data analysis,'' \emph{Linear algebra and its applications},
  vol. 416, no.~1, pp. 29--47, 2006.

\bibitem{ding2008convex}
C.~H. Ding, T.~Li, and M.~I. Jordan, ``Convex and semi-nonnegative matrix
  factorizations,'' \emph{IEEE transactions on pattern analysis and machine
  intelligence}, vol.~32, no.~1, pp. 45--55, 2008.

\bibitem{cai2010graph}
D.~Cai, X.~He, J.~Han, and T.~S. Huang, ``Graph regularized nonnegative matrix
  factorization for data representation,'' \emph{IEEE transactions on pattern
  analysis and machine intelligence}, vol.~33, no.~8, pp. 1548--1560, 2010.

\bibitem{greene2009matrix}
D.~Greene and P.~Cunningham, ``A matrix factorization approach for integrating
  multiple data views,'' in \emph{Joint European conference on machine learning
  and knowledge discovery in databases}.\hskip 1em plus 0.5em minus 0.4em\relax
  Springer, 2009, pp. 423--438.

\bibitem{wang2011fast}
H.~Wang, F.~Nie, H.~Huang, and F.~Makedon, ``Fast nonnegative matrix
  tri-factorization for large-scale data co-clustering,'' in
  \emph{Twenty-Second International Joint Conference on Artificial
  Intelligence}, 2011.

\bibitem{liu2013multi}
J.~Liu, C.~Wang, J.~Gao, and J.~Han, ``Multi-view clustering via joint
  nonnegative matrix factorization,'' in \emph{Proceedings of the 2013 SIAM
  international conference on data mining}.\hskip 1em plus 0.5em minus
  0.4em\relax SIAM, 2013, pp. 252--260.

\bibitem{wang2015feature}
Z.~Wang, X.~Kong, H.~Fu, M.~Li, and Y.~Zhang, ``Feature extraction via
  multi-view non-negative matrix factorization with local graph
  regularization,'' in \emph{2015 IEEE International conference on image
  processing (ICIP)}.\hskip 1em plus 0.5em minus 0.4em\relax IEEE, 2015, pp.
  3500--3504.

\bibitem{he2016learning}
M.~He, Y.~Yang, and H.~Wang, ``Learning latent features for multi-view
  clustering based on nmf,'' in \emph{International Joint Conference on Rough
  Sets}.\hskip 1em plus 0.5em minus 0.4em\relax Springer, 2016, pp. 459--469.

\bibitem{wang2016multi}
H.~Wang, Y.~Yang, and T.~Li, ``Multi-view clustering via concept factorization
  with local manifold regularization,'' in \emph{2016 IEEE 16th International
  Conference on Data Mining (ICDM)}.\hskip 1em plus 0.5em minus 0.4em\relax
  IEEE, 2016, pp. 1245--1250.

\bibitem{han2017bilateral}
J.~Han, K.~Song, F.~Nie, and X.~Li, ``Bilateral k-means algorithm for fast
  co-clustering,'' in \emph{Proceedings of the AAAI Conference on Artificial
  Intelligence}, vol.~31, no.~1, 2017.

\bibitem{yang2021fast}
B.~Yang, X.~Zhang, F.~Nie, F.~Wang, W.~Yu, and R.~Wang, ``Fast multi-view
  clustering via nonnegative and orthogonal factorization,'' \emph{IEEE
  transactions on image processing}, vol.~30, pp. 2575--2586, 2021.

\bibitem{gao2020enhanced}
Q.~Gao, P.~Zhang, W.~Xia, D.~Xie, X.~Gao, and D.~Tao, ``Enhanced tensor rpca
  and its application,'' \emph{IEEE Transactions on Pattern Analysis and
  Machine Intelligence}, vol.~43, no.~6, pp. 2133--2140, 2020.

\bibitem{kilmer2011factorization}
M.~E. Kilmer and C.~D. Martin, ``Factorization strategies for third-order
  tensors,'' \emph{Linear Algebra and its Applications}, vol. 435, no.~3, pp.
  641--658, 2011.

\bibitem{zha2020benchmark}
Z.~Zha, X.~Yuan, B.~Wen, J.~Zhou, J.~Zhang, and C.~Zhu, ``A benchmark for
  sparse coding: When group sparsity meets rank minimization,'' \emph{IEEE
  Transactions on Image Processing}, vol.~29, pp. 5094--5109, 2020.

\bibitem{xie2016weighted}
Y.~Xie, S.~Gu, Y.~Liu, W.~Zuo, W.~Zhang, and L.~Zhang, ``Weighted schatten
  $p$-norm minimization for image denoising and background subtraction,''
  \emph{{IEEE} Trans. Image Process.}, vol.~25, no.~10, pp. 4842--4857, 2016.

\bibitem{nie2016parameter}
F.~Nie, J.~Li, X.~Li \emph{et~al.}, ``Parameter-free auto-weighted multiple
  graph learning: a framework for multiview clustering and semi-supervised
  classification.'' in \emph{IJCAI}, 2016, pp. 1881--1887.

\bibitem{zhan2017graph}
K.~Zhan, C.~Zhang, J.~Guan, and J.~Wang, ``Graph learning for multiview
  clustering,'' \emph{IEEE transactions on cybernetics}, vol.~48, no.~10, pp.
  2887--2895, 2017.

\bibitem{luo2018consistent}
S.~Luo, C.~Zhang, W.~Zhang, and X.~Cao, ``Consistent and specific multi-view
  subspace clustering,'' in \emph{Thirty-second AAAI conference on artificial
  intelligence}, 2018.

\bibitem{wang2019gmc}
H.~Wang, Y.~Yang, and B.~Liu, ``Gmc: Graph-based multi-view clustering,''
  \emph{IEEE Transactions on Knowledge and Data Engineering}, vol.~32, no.~6,
  pp. 1116--1129, 2019.

\bibitem{kang2020large}
Z.~Kang, W.~Zhou, Z.~Zhao, J.~Shao, M.~Han, and Z.~Xu, ``Large-scale multi-view
  subspace clustering in linear time,'' in \emph{Proceedings of the AAAI
  conference on Artificial Intelligence}, vol.~34, no.~04, 2020, pp.
  4412--4419.

\bibitem{hu2020multi}
Z.~Hu, F.~Nie, R.~Wang, and X.~Li, ``Multi-view spectral clustering via
  integrating nonnegative embedding and spectral embedding,'' \emph{Information
  Fusion}, vol.~55, pp. 251--259, 2020.

\bibitem{li2020multiview}
X.~Li, H.~Zhang, R.~Wang, and F.~Nie, ``Multiview clustering: A scalable and
  parameter-free bipartite graph fusion method,'' \emph{IEEE Transactions on
  Pattern Analysis and Machine Intelligence}, vol.~44, no.~1, pp. 330--344,
  2020.

\bibitem{lecun1998gradient}
Y.~LeCun, L.~Bottou, Y.~Bengio, and P.~Haffner, ``Gradient-based learning
  applied to document recognition,'' \emph{Proceedings of the IEEE}, vol.~86,
  no.~11, pp. 2278--2324, 1998.

\bibitem{fu2015transductive}
Y.~Fu, T.~M. Hospedales, T.~Xiang, and S.~Gong, ``Transductive multi-view
  zero-shot learning,'' \emph{IEEE transactions on pattern analysis and machine
  intelligence}, vol.~37, no.~11, pp. 2332--2345, 2015.

\bibitem{chua2009nus}
T.-S. Chua, J.~Tang, R.~Hong, H.~Li, Z.~Luo, and Y.~Zheng, ``Nus-wide: a
  real-world web image database from national university of singapore,'' in
  \emph{Proceedings of the ACM international conference on image and video
  retrieval}, 2009, pp. 1--9.

\bibitem{oliva2001modeling}
A.~Oliva and A.~Torralba, ``Modeling the shape of the scene: A holistic
  representation of the spatial envelope,'' \emph{International journal of
  computer vision}, vol.~42, no.~3, pp. 145--175, 2001.

\bibitem{fei2005bayesian}
L.~Fei-Fei and P.~Perona, ``A bayesian hierarchical model for learning natural
  scene categories,'' in \emph{2005 IEEE Computer Society Conference on
  Computer Vision and Pattern Recognition (CVPR'05)}, vol.~2.\hskip 1em plus
  0.5em minus 0.4em\relax IEEE, 2005, pp. 524--531.

\bibitem{lazebnik2006beyond}
S.~Lazebnik, C.~Schmid, and J.~Ponce, ``Beyond bags of features: Spatial
  pyramid matching for recognizing natural scene categories,'' in \emph{2006
  IEEE computer society conference on computer vision and pattern recognition
  (CVPR'06)}, vol.~2.\hskip 1em plus 0.5em minus 0.4em\relax IEEE, 2006, pp.
  2169--2178.

\bibitem{li2015large}
Y.~Li, F.~Nie, H.~Huang, and J.~Huang, ``Large-scale multi-view spectral
  clustering via bipartite graph,'' in \emph{Twenty-Ninth AAAI Conference on
  Artificial Intelligence}, 2015.

\bibitem{lewis2005nonsmooth}
A.~S. Lewis and H.~S. Sendov, ``Nonsmooth analysis of singular values. part i:
  Theory,'' \emph{Set-Valued Analysis}, vol.~13, no.~3, pp. 213--241, 2005.

\end{thebibliography}
}

\end{document}